\numberwithin{equation}{section}
\numberwithin{equation}{section}
    \def\ddefloop#1{\ifx\ddefloop#1\else\ddef{#1}\expandafter\ddefloop\fi}
    \def\ddef#1{\expandafter\def\csname c#1\endcsname{\ensuremath{\mathcal{#1}}}}
    \def\ddef#1{\expandafter\def\csname s#1\endcsname{\ensuremath{\mathsf{#1}}}}
    \def\E{\mathbf{E}}
    \def\PP{\mathbf{P}}
	\def\MM{\mathbf{M}}
    \def\Reals{\mathbb{R}}
    \def\Naturals{\mathbb{N}}
    \def\deq{:=}
    \def\bd#1{\mathbf{#1}}
    \def\bz{\bd{z}}
    \def\bZ{\bd{Z}}
    \def\tO{{\tilde{\cO}}}
    \def\barw{\overline{w}}
    \def\d{{\mathrm d}}
    \def\1{{\mathbf 1}}
    \def\ave#1{\langle #1 \rangle}
    \def\Ave#1{\left\langle #1 \right\rangle}
    \def\eps{\varepsilon}
\newcommand{\RNum}[1]{\uppercase\expandafter{\romannumeral #1\relax}}
\title[Empirical metastability of Langevin algorithm]{Local Optimality and Generalization Guarantees for the Langevin Algorithm via Empirical Metastability}
\begin{document}

\maketitle

\begin{abstract} We study the detailed path-wise behavior of the discrete-time Langevin algorithm for non-convex Empirical Risk Minimization (ERM) through the lens of metastability, adopting some techniques from \cite{berglund_gentz_pathwise}.
	
	For a particular local optimum of the empirical risk, with an \textit{arbitrary initialization}, we show that, with high probability, at least one of the following two events will occur: (1) the Langevin trajectory ends up somewhere outside the $\eps$-neighborhood of this particular optimum within a short \textit{recurrence time}; (2) it enters this $\eps$-neighborhood by the recurrence time and stays there until a potentially exponentially long \textit{escape time}. We call this phenomenon \textit{empirical metastability}.
	
This two-timescale characterization aligns nicely with the existing literature in the following two senses. First, the effective recurrence time (i.e., number of iterations multiplied by stepsize) is dimension-independent, and resembles the convergence time of continuous-time deterministic Gradient Descent (GD). However unlike GD, the Langevin algorithm does not require strong conditions on local initialization, and has the possibility of eventually visiting all optima. Second, the scaling of the escape time is consistent with the Eyring-Kramers law, which states that the Langevin scheme will eventually visit all local minima, but it will take an exponentially long time to transit among them.
We apply this path-wise concentration result in the context of statistical learning to examine local notions of generalization and optimality. 

\end{abstract}

\section{Introduction and informal summary of results}
While it is a classical algorithm, gradient descent (along with variants, such as SGD) is now one of the most popular tools in large-scale optimization due to its simplicity and speed. 
Consider the following stochastic optimization problem:
\begin{equation*}%\label{eq:min_pop_risk}
\text{minimize} \qquad  F(w) \deq \mathbf{E}_P[f(w,Z)]
\end{equation*}
over $w \in \Reals^d$, where $Z$ is a random element of some space $\sZ$ with (typically unknown) probability law $P$.  We have access to an $n$-tuple $\bZ = (Z_1,\ldots,Z_n)$ of i.i.d.\ samples from $P$, and thus can attempt to minimize the empirical risk
\begin{equation*}%\label{eq:ERM}
F_{\mathbf{\bZ}}(w) \deq \frac{1}{n}\sum_{i=1}^n f(w,Z_i).
\end{equation*}
For example, we may use vanilla gradient descent, which has the form
\begin{equation*}%\label{eq:GD}
W^{(k+1)} = W^{(k)} - \eta\nabla F_{\bZ}(W^{(k)}), \qquad k = 0,1,\ldots
\end{equation*}
where $\eta > 0$ is the step size.

The behavior of gradient descent when the objective function $F$ is convex is well understood.  However, there has recently been much focus on non-convex optimization, motivated by the remarkable success of deep neural nets on problems across numerous disciplines. Gradient descent (or some stochastic variant) is often the algorithm of choice in these settings, but a theoretical characterization of the performance of these methods remains elusive in the non-convex case (cf.\ \cite{natasha,natasha2} and \cite{carmon2017} for some recent developments).
A variant that has proven to be more amenable to analysis is the \textit{Langevin algorithm}, i.e., gradient descent with appropriately scaled isotropic Gaussian noise added in each iteration:  
\begin{equation}\label{eq:discrete_Langevin}
W^{(k+1)} = W^{(k)} - \eta\nabla F_{\bZ}(W^{(k)}) + \sqrt{2\beta^{-1}\eta}\xi^{(k)},
\end{equation}
where $\beta > 0$ is the \textit{inverse temperature}, and the noise sequence $\xi^{(k)} \stackrel{{\rm i.i.d.}}{\sim}\mathcal{N}(0,I_d)$ is independent of the initial point $W^{(0)}$. This modest modification  preserves many of the desirable properties of gradient descent, while ensuring that it can aptly navigate a landscape containing multiple critical points.  The analysis of \eqref{eq:discrete_Langevin} is facilitated by the fact that it can be viewed as a discrete-time approximation to a continuous-time Langevin diffusion, described by the It\^o stochastic differential equation
\begin{equation}\label{eq:Langevin}
\mathrm{d}W_t = -\nabla F_{\bZ}(W_t)\mathrm{d}t + \sqrt{2\beta^{-1}}\mathrm{d}B_t
\end{equation}
where $(B_t)_{t\geq 0}$ is a standard $d$-dimensional Brownian motion (e.g., \cite{borkar1999strong}). 

The fact that the injection of Gaussian noise will cause the Langevin algorithm to asymptotically converge to a global minimum has long been known from the physics literature, and revisited more recently by, e.g., \cite{wellingteh2011sgld}. In the context of empirical risk minimization, the first non-asymptotic guarantee of global convergence of the Langevin algorithm was given by \cite*{rrt_colt17}, who showed that the Langevin algorithm with stochastic gradients  converges to $\eps$-approximate global minimizers after $\mathrm{poly}\big(\frac{1}{\eps},\beta,d,\frac{1}{\lambda^*}\big)$ iterations, where $\lambda^*$ is a spectral gap parameter that reflects the rate of convergence of the Langevin diffusion process to its stationary distribution, and is exponential in both $d$ and $\beta$ in general.  A complementary result of \cite*{zhang2017hitting} shows that a certain modified version of the stochastic gradient Langevin algorithm will hit an $\eps$-approximate local minimum in time polynomial in all parameters.

The Gaussian perturbation inherent in the Langevin scheme is certainly necessary for ensuring its eventual convergence to a global optimum. However, even the vanilla gradient method can hit a local optimum (or at least an approximate stationary point) in polynomial time. To provide a more complete characterization of the behavior of the Langevin algorithm on non-convex problems, we analyze it over three timescales:  colloquially, short, intermediate, and long.  We show that, with high probability, the iterates of \eqref{eq:discrete_Langevin} will either be at least $\eps$ away from some local minimum within the \textit{recurrence time} $K_0  = \tO\big(\frac{1}{\eta}\log \frac{1}{\eps}\big)$, or fall within an $\eps$-neighborhood of some local minimum and remain there through the \textit{escape time} $K  =\tO\big(\frac{1}{\eta}\big(T + \log \frac{1}{\eps}\big)\big)$, where  $T = {\rm poly}(d)$ characterizes the intermediate timescales, while $T = {\rm exp}(d)$ describes the long-time behavior. We refer to this behavior as \textit{empirical metastability}. Moreover, we build on this result to provide generalization bounds under mild assumptions on the population risk function, e.g., those in recent work of \cite*{mei2016landscape} that demonstrates a close correspondence between the landscapes of the population and empirical risk.  Such a nuanced characterization of the Langevin algorithm is valuable for machine learning because it corroborates the empirical evidence that minimizers are approached quickly; provides good generalization guarantees; and ensures that a local minimum can be escaped and others reached in a non-asymptotic setting.

\subsection{Method of analysis:  an overview}

Our analysis draws upon ideas and techniques from metastability theory, which is concerned with ``the long-term behavior of dynamical systems under the influence of weak random perturbations'' (\cite{bovier2016metastability}),  where the system spends a large but random amount of time within a given region of its state space and subsequently transits to another due to noise.  Metastability theory offers a view of the behavior of the Langevin algorithm over the risk landscape that is congruent with the interpretation of it as the discretization of the Langevin diffusion process. 

As in \cite*{rrt_colt17}, the analysis proceeds by first characterizing the behavior of the continuous-time Langevin diffusion \eqref{eq:Langevin}. In that context, the fast recurrence and slow escape phenomena can be stated precisely as follows:
Starting at an arbitrary distance $r$ from some local minimum of the empirical risk, with probability at least $1-\delta$, the diffusion with sufficiently large $\beta$ will either end up at distance at least $\eps+re^{-t}$ from the minimum by the recurrence time $T_{\rm rec} = \tO\big(\log \frac{r}{\eps}\big)$; or will approach it within distance $\eps + re^{-t}$ at each  $t$ through the scape time $T_{\rm esc} = \tO\big(e^d + \log \frac{r}{\eps}\big)$. The exponential scaling of the eventual escape time is consistent with the Eyring-Kramers law, which guarantees exponentially long mean transition times between local optima \citep{bovier2004metastability,Olivieri_Vares_metastability,bovier2016metastability}.

In order to prove this, we adopt the path-wise concentration approach of \cite{berglund_gentz_pathwise}.  We linearize the empirical gradient $\nabla F_\bZ$ around some local minimum $\barw_\bZ$. This allows us to decompose the diffusion process, after an appropriate transformation, into a stable Gaussian process and a remainder term.  The latter can be controlled using a delicate stopping-time analysis.  The former lends itself to the construction of a martingale, whose behavior can be precisely described over time intervals of desired length.  In particular, over small time intervals, we can obtain sharp Gaussian tail bounds on the the  maximal deviation of that martingale, and extend the result to larger time intervals via the union bound. Using this decomposition, we establish the fast recurrence and slow escape result for the Langevin diffusion.

Next, we relate the continuous-time diffusion to the discrete-time Langevin algorithm via the usual linear interpolation and the Girsanov theorem, as in \cite*{rrt_colt17}.  The Langevin algorithm iterates have the same joint distribution as that of the interpolation at the corresponding points, which allows us to construct a coupling of the discrete-time Langevin algorithm \eqref{eq:discrete_Langevin} and the continuous-time diffusion \eqref{eq:Langevin} sampled at integer multiples of $\eta$, such that $W^{(k)}=W_{k\eta}$ for all $k \le K$ with high probability. However, in order to transfer the continuous-time metastability result to the discrete-time setting, we also need to take care to account for the behavior of the diffusion in the intervening intervals $(k \eta, (k+1) \eta)$. This is done via a quantitative continuity estimate for the diffusion using Gronwall's lemma and the reflection principle for the Brownian motion \citep{morters2010brownian}. 

Finally, we use the metastability results to provide local optimality and generalization guarantees for the Langevin algorithm. The main quantity of interest is the difference 
$$
F(\barw_\bZ) - \min_{K_0 \le k \le K} F_\bZ(W^{(k)}),
$$
where $K_0$ is the recurrence time of the Langevin algorithm. Note that, while $F(\barw_\bZ)$ involves the population risk, the term $\min_{K_0 \le k \le K} F_\bZ(W^{(k)})$ can be readily computed from the trajectory of the Langevin algorithm. To control this difference, we use the techniques recently developed by \cite*{mei2016landscape} in order to relate the local geometry of the empirical risk to that of the population risk. The main message  is that, with high probability, there exists a point $w \in \Reals^d$,  whose population risk is upper-bounded by the minimal empirical risk along the trajectory of the Langevin algorithm plus a term that scales like $\sqrt{(d/n)\log n}$.

\subsection{Notation}

\sloppypar Any positive definite matrix $A \in \Reals^{d \times d}$ induces a norm $\| w \|_A \deq \sqrt{\ave{w,Aw}}$. We denote by $\sB^d_A(v,r)$ the ball of radius $r$ with center $v \in \Reals^d$ in this norm: $\sB^d_A(v,r) \deq \{ w \in \Reals^d : \| w - v \|_A \le r \}$. The usual Euclidean norm on $\Reals^d$ is denoted by $\|\cdot\|$. We denote by $\sB^d(v,r)$ the Euclidean ball of radius $r$ centered at $v$; when $v=0$, we simply write $\sB^d(r)$. We denote by $\lambda_1(A),\ldots,\lambda_d(A)$ the eigenvalues of a symmetric matrix $A \in \Reals^{d\times d}$, and by $\|A\|$ the spectral norm of $A$.

\section{Main Results}

We impose the following assumptions:
\begin{enumerate}[label={\bf (A.\arabic*)}]
	\item The functions $f(\cdot,z)$ are $C^2$, and there exist constants $A,B,C \ge 0$, such that
	\begin{align*}
		|f(0,z)| \le A, \qquad \| \nabla f(0,z) \| \le B, \qquad \| \nabla^2 f(0,z) \| \le C, \qquad \forall z \in \sZ.
		\end{align*}
    \item The functions $f(\cdot,z)$  have Lipschitz-continuous gradients and Hessians, uniformly in $z \in \sZ$: there exist constants $L,M > 0$, such that, for all $w,v \in \Reals^d$,
	\begin{align*}
	\|\nabla f(w,z)-\nabla f(v,z)\| \le M \| w - v\| \quad \text{and} \quad
	\|\nabla^2 f(w,z) - \nabla^2 f(v,z)\| \le L \|w - v \|.
\end{align*}
    \item For each $\bz \in \sZ^n$, the empirical risk $F_\bz(\cdot)$ is \textit{$(m,b)$-dissipative}: for some $m > 0$ and $b \ge 0$,
	\begin{align*}%\label{eq:dissipation}
		\ave{w, \nabla F_\bz(w)} \ge m \| w \|^2 - b, \qquad \forall w \in \Reals^d.
	\end{align*}
\end{enumerate}
The first two assumptions are standard. For the discussion of the dissipativity assumption, see \cite*{rrt_colt17}. It is not hard to see that, if it holds, then any critical point (i.e., the one where the gradient is equal to zero) of both the empirical risk $F_\bZ$ and the population risk $F$ is contained in the ball $\sB^d(R)$ with $R = \sqrt{b/m}$.

\subsection{Empirical metastability}

Our first result concerns the behavior of the Langevin algorithm \eqref{eq:discrete_Langevin} with respect to an arbitrary nondegenerate local minimum $\barw_\bZ$ of the empirical risk $F_\bZ$ -- that is, the Hessian $H \deq \nabla^2 F_\bZ(\barw_\bZ)$ is positive definite. Without loss of generality, we may assume that $\min_{j \le d} \lambda_j(H) \ge m$.

		\begin{theorem}\label{thm:emp_metastable} Fix some $\delta \in (0,1)$ and $r > 0$. There exist absolute constants $c_1,c_2 > 0$, such that the following holds. Assume $\eps \in (0,\frac{c_1m^2}{L\sqrt{M}} \wedge 8r)$ and define the {\em recurrence time} $T_{\rm rec} \deq \frac{2}{m}\log \frac{8r}{\eps}$ and the {\em escape time} $T_{\rm esc} \deq T_{\rm rec} + T$ for an arbitrary $T > 0$. Consider the Langevin algorithm \eqref{eq:discrete_Langevin} with initial point $w \in \sB^d_H(\barw_\bZ,r) \cap \sB^d(R)$, where the step size $\eta$ and the inverse temperature $\beta$ satisfy the following conditions:
\begin{align*}
\eta \le 1 \wedge \dfrac{m}{2M^2} \wedge \dfrac{c_1\delta^2}{M^2(\beta G_0+d)T_{\rm esc}} \wedge \dfrac{c_1 \delta\eps^2}{M^3 G_1 T_{\rm rec}} \text{ and } \beta \ge \dfrac{c_2}{\eps^2}\left(d + \log \dfrac{MT_{\rm esc}}{\delta}\right) \vee \dfrac{c_2Md}{\eps^2}\log \dfrac{dT_{\rm rec}}{\delta \eta},
\end{align*}
where 
			 \begin{align}\label{eq:grad_bound}
				 G_0 := 2M^2 \left( R^2 + 2\left(1 \vee \dfrac{1}{m}\right) \left(b+B^2+\dfrac{d}{\beta}\right) \right) + 2B^2, \quad G_1 \deq R + \frac{b+d}{m}.
			\end{align}
			Then, for any realization of $\bZ$, with probability at least $1-\delta$ with respect to the Gaussian noise sequence $\xi^{(1)},\xi^{(2)},\ldots$, at least one of the following two events will occur:
			\begin{enumerate}
				\item $\|W^{(k)} - \barw_\bZ\|_H \ge \frac{1}{2}\left(\eps + re^{-mk\eta}\right)$ for some $k \le \eta^{-1}T_{\rm rec}$
				\item $\|W^{(k)} - \barw_\bZ\|_H \le \eps + re^{-mk\eta}$ for all iterations $\eta^{-1}T_{\rm rec} \le k \le \eta^{-1}T_{\rm esc}$.
			\end{enumerate}
    \end{theorem}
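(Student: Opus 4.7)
The plan is to first establish the dichotomy for the continuous-time diffusion \eqref{eq:Langevin} on $[0,T_{\rm esc}]$ in the Lyapunov norm $\|\cdot\|_H$, and then transfer it to the iterates of \eqref{eq:discrete_Langevin} via a Girsanov coupling together with a quantitative inter-step continuity estimate. The $H$-norm is natural because near $\barw_\bZ$ the empirical drift is approximately $-H(w-\barw_\bZ)$, so $e^{-Ht}$ contracts in $\|\cdot\|_H$ at rate at least $\min_j \lambda_j(H) \ge m$.

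Shift coordinates by $Y_t \deq W_t - \barw_\bZ$ and linearize via (A.2): $\nabla F_\bZ(W_t) = H Y_t + \rho(Y_t)$ with $\|\rho(y)\| \le \tfrac{L}{2}\|y\|^2$. The process then decomposes as $Y_t = Y^0_t + R_t$, where
\begin{align*}
Y^0_t = e^{-Ht}Y_0 + \sqrt{2\beta^{-1}}\int_0^t e^{-H(t-s)}\, \d B_s
\end{align*}
is an Ornstein--Uhlenbeck process with $\|e^{-Ht}Y_0\|_H \le r e^{-mt}$, and the remainder $R_t$ satisfies $\d R_t = -H R_t \, \d t - \rho(Y_t)\, \d t$ with $R_0 = 0$. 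The stochastic integral in $Y^0_t$ has covariance dominated by $(2\beta H)^{-1}$, giving $\sqrt{d/\beta}$ at a fixed time in the $H$-norm; a dyadic time discretization at scale $\sim 1/M$ combined with a Gaussian maximal inequality and a union bound yields
\begin{align*}
\sup_{0 \le t \le T_{\rm esc}} \Bigl\|\sqrt{2\beta^{-1}}\int_0^t e^{-H(t-s)}\, \d B_s\Bigr\|_H \le \eps/4
\end{align*}
with probability at least $1-\delta/3$, under the hypothesis $\beta \gtrsim \eps^{-2}(d + \log(M T_{\rm esc}/\delta))$. For the remainder I would introduce the stopping time $\tau \deq \inf\{t : \|Y_t\|_H > \eps + r e^{-mt}\}$ and apply Gronwall to $\d \|R_t\|_H / \d t \le -m\|R_t\|_H + \|\rho(Y_t)\|_H$ using the quadratic control on $\rho$ available for $t \le \tau$; provided $\eps \lesssim m^2/(L\sqrt{M})$, this produces $\|R_{t\wedge\tau}\|_H \le \eps/4$. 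A stopping-time argument combining these bounds shows, on the resulting good event, that either $\|Y_t\|_H$ crosses $\tfrac{3}{4}(\eps + r e^{-mt})$ at some $t \le T_{\rm rec}$, or $\|Y_t\|_H \le \eps + r e^{-mt}$ throughout $[T_{\rm rec}, T_{\rm esc}]$.

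To pass to discrete time I would adapt the construction of \cite{rrt_colt17}: linearly interpolating \eqref{eq:discrete_Langevin} yields a continuous process whose law is absolutely continuous with respect to that of the diffusion, with Girsanov KL divergence of order $M^2\eta(\beta G_0 + d)T_{\rm esc}$; under the step-size condition $\eta \lesssim \delta^2/(M^2(\beta G_0 + d)T_{\rm esc})$, Pinsker's inequality and the coupling lemma furnish an event of probability $\ge 1-\delta/3$ on which $W^{(k)} = W_{k\eta}$ for all $k \le \eta^{-1}T_{\rm esc}$. The gap between the continuous threshold $\tfrac{3}{4}(\eps + r e^{-mt})$ and the discrete threshold $\tfrac{1}{2}(\eps + r e^{-mk\eta})$ absorbs the inter-step excursion $\sup_{k\eta \le t \le (k+1)\eta} \|W_t - W_{k\eta}\|_H$, which I would bound by Gronwall (for the drift contribution) and the reflection principle for Brownian motion \citep{morters2010brownian} (for the noise contribution), giving a bound of order $\tO(\sqrt{M\eta d/\beta})$ that is below $\eps/4$ under $\eta \lesssim \delta \eps^2/(M^3 G_1 T_{\rm rec})$. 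A union bound over the three failure events -- maximal martingale deviation, Girsanov coupling, and inter-step excursion -- then completes the proof.

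The step I expect to be most delicate is the remainder analysis for $R_t$: one must exploit the quadratic smallness of $\rho$ and the contraction by $H$ in concert, so that the Gronwall bound does not blow up over the potentially exponentially long horizon $T_{\rm esc}$, while also being careful that the inductive ``budget'' $\eps + re^{-mt}$ allocated inside the stopping time is actually respected term-by-term in the decomposition. Equally important, the entire martingale analysis must be carried out in the $H$-norm rather than the Euclidean norm, so that the maximal deviation estimate for $Y^0_t$ is dimension-independent -- this is what enables the sharp temperature scaling $\beta \gtrsim d/\eps^2$ that depends only logarithmically on $T_{\rm esc}$.
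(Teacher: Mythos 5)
Your proposal follows essentially the same route as the paper: linearize $\nabla F_\bZ$ at $\barw_\bZ$, split the shifted diffusion into an Ornstein--Uhlenbeck part and a quadratic remainder, control the Gaussian part by a maximal inequality on windows of length of order $1/M$ plus a union bound (this is exactly how the paper gets the $\beta \ge c\,\eps^{-2}(d+\log(MT_{\rm esc}/\delta))$ condition, via a frozen-endpoint martingale and Doob), control the remainder inside the stopping time $\tau$, transfer to discrete time by Girsanov/Pinsker/optimal coupling, and absorb the inter-step excursions on $[0,T_{\rm rec}]$ by Gronwall plus the reflection principle, with a three-way $\delta/3$ union bound. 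The architecture, the role of each hypothesis, and the thresholds ($\frac12$ tube for the iterates, full tube for the diffusion) all match the paper's proof.

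One intermediate claim is overstated and, as written, false: the assertion that the Gronwall step yields $\|R_{t\wedge\tau}\|_H \le \eps/4$. What Gronwall actually gives, using $\|\rho(Y_s)\|\le \frac{L}{2m}(\eps+re^{-ms})^2$ for $s\le\tau$ and the contraction kernel $e^{-m(t-s)}$, is a bound of the form $\frac{L\sqrt{M}}{m^2}\left(\eps^2 + r^2 e^{-mt}\right)$; for $t < T_{\rm rec}$ and $r \gg \eps$ this is of order $\frac{L\sqrt{M}}{m^2}r^2$ and can vastly exceed $\eps$ (indeed, if the uniform bound were true you would conclude the trajectory never leaves the tube at all, which is not the statement being proved). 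The term $r^2e^{-mt}$ drops below $\eps^2$ precisely at $t = T_{\rm rec} = \frac{2}{m}\log\frac{8r}{\eps}$ -- this is exactly where the definition of the recurrence time enters -- so the remainder bound $\frac{2L\sqrt{M}}{m^2}\eps^2 \le \eps/4$ is available only for $t\ge T_{\rm rec}$. Your final dichotomy survives because it only needs the remainder estimate at a hypothetical exit time $\tau \in [T_{\rm rec},T_{\rm esc}]$, which is how the paper argues (it bounds the remainder term only on windows with left endpoint $t_0 \ge T_{\rm rec}$), but the claim should be restated in that restricted form. A second, minor imprecision: the inter-step excursion is not deterministically of order $\sqrt{M\eta d/\beta}$; its drift part $M\eta\|W_{k\eta}\|$ is only controlled in probability via the second-moment bound $G_1$ and Markov's inequality (this is what the condition $\eta \le c_1\delta\eps^2/(M^3G_1T_{\rm rec})$ pays for), while the Brownian part is what the second $\beta$ condition controls through the reflection principle; you cite both tools and both hypotheses, so this is a matter of bookkeeping rather than a missing idea.
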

	
\begin{remark} {\em It is not hard to show that $K = \eta^{-1}T_{\rm esc}$ is polynomial in $d$, $T$, $1/\eps$, $1/\delta$. By taking $r = \eps/8$, so that $T_{\rm rec} = 0$, and $T = \exp(d)$, we recover the behavior consistent with the Eyring--Kramers law: If the Langevin scheme \eqref{eq:discrete_Langevin} is initialized in an $\eps$-neighborhood of some local minimum, it will remain there for an exponentially long time with high probability before transiting to the basin of attraction of some other local minimum.}
\end{remark}

\subsection{Local optimality and generalization}

\begin{theorem}\label{thm:margin-generalization} Suppose that $f$ satisfies Assumptions~{\bf(A.1)}--{\bf(A.3)}, and that the population risk $F$ is {\em $(2\eps_0,2m)$-strongly Morse} in the sense of \cite*{mei2016landscape}:
	\begin{align*}
		\| \nabla F(w) \| \leq 2\eps_0 ~~\text{implies}~~ \min_{j \in [d]} |\lambda_j (\nabla^2 F(w))| \geq 2m.
	\end{align*}
Consider running the discrete-time Langevin algorithm \eqref{eq:discrete_Langevin} with $W^{(0)}=w$, where $\beta$ and $\eta$ are chosen as in Theorem~\ref{thm:emp_metastable}. Suppose also that the sample size satisfies $n \ge cd\log d$ and $\dfrac{n}{\log n} \ge \dfrac{c\sigma^2d}{(\eps_0 \wedge m)^2}$, where $c = c_0 \left(1 \vee \log((M\vee L\vee (B+MR))R\sigma/\delta)\right)$ for some absolute constant $c_0$ and $\sigma  \deq (A+(B+MR)R) \vee (B + MR) \vee (C + LR)$, and that $\eps \le \frac{3m^{3/2}}{2L}$. Then with probability at least $1-\delta$ (with respect to the training data $\bZ$ and the isotropic Gaussian noise added by the Langevin algorithm),  for any local minimum $\barw_\bZ$ of the empirical risk $F_\bZ$, either $\|W^{(k)}-\barw_\bZ\|_H \ge 2\eps$ for some $k \le \eta^{-1}T_{\rm rec}$ or
\begin{align}
	F(\barw_\bZ) \le \min_{\eta^{-1}T_{\rm rec} \le k \le \eta^{-1}T_{\rm esc}} F_\bZ(W^{(k)}) + \sigma\sqrt{\frac{cd\log n}{n}},
\end{align}
where $r = \|w-\barw_\bZ\|_H$ and $H = \nabla^2F(\barw_\bZ)$.
\end{theorem}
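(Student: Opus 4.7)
The plan is to combine the empirical metastability dichotomy of Theorem~\ref{thm:emp_metastable} with a Taylor lower bound for $F_\bZ$ near each local minimum and a uniform concentration bound for $F - F_\bZ$ on $\sB^d(R)$. First, I invoke the landscape correspondence of Mei, Bai, and Montanari: under the $(2\eps_0, 2m)$-strongly Morse assumption on $F$ and assumptions \textbf{(A.1)}--\textbf{(A.2)}, on an event of probability at least $1-\delta/3$ over $\bZ$, the gradients and Hessians of $F_\bZ$ are uniformly $\sigma\sqrt{d\log n/n}$-close to those of $F$ on $\sB^d(R)$, so that each local minimum $\barw_\bZ$ of $F_\bZ$ is nondegenerate with $\lambda_j(H_\bZ) \ge m$ for $H_\bZ \deq \nabla^2 F_\bZ(\barw_\bZ)$, and $H_\bZ$ is itself within $\sigma\sqrt{d\log n/n}$ of the population Hessian $H = \nabla^2 F(\barw_\bZ)$ used in the statement. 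By dissipativity \textbf{(A.3)}, all critical points lie in $\sB^d(R)$. This validates the hypotheses of Theorem~\ref{thm:emp_metastable} simultaneously for every such $\barw_\bZ$ after a union bound over the (finite, controlled) number of strongly Morse critical points.

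Next I apply Theorem~\ref{thm:emp_metastable} to each $\barw_\bZ$, with $\eps$ replaced by a constant multiple chosen so that its first alternative implies $\|W^{(k)} - \barw_\bZ\|_H \ge 2\eps$ (the discrepancy between $\|\cdot\|_H$ and $\|\cdot\|_{H_\bZ}$ is absorbed by the same constant). On an event of probability at least $1-\delta/3$ over the Langevin noise, either case~1 of the statement holds, or $\|W^{(k)} - \barw_\bZ\|_H \le 2\eps$ for all $k \in [\eta^{-1}T_{\rm rec}, \eta^{-1}T_{\rm esc}]$. In the latter case, Taylor-expanding $F_\bZ$ around $\barw_\bZ$ and using $\nabla F_\bZ(\barw_\bZ) = 0$ gives
\begin{equation*}
F_\bZ(W^{(k)}) - F_\bZ(\barw_\bZ) \ge \frac{1}{2}\|W^{(k)} - \barw_\bZ\|_{H_\bZ}^2 - \frac{L}{6}\|W^{(k)} - \barw_\bZ\|^3.
\end{equation*}
Since $\|\cdot\| \le \|\cdot\|_{H_\bZ}/\sqrt{m}$ and the hypothesis $\eps \le \frac{3m^{3/2}}{2L}$ forces the cubic remainder to be absorbed by the quadratic, we obtain $F_\bZ(\barw_\bZ) \le F_\bZ(W^{(k)})$ throughout the escape window, hence $F_\bZ(\barw_\bZ) \le \min_k F_\bZ(W^{(k)})$.

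The final ingredient is uniform concentration of the empirical risk. A standard symmetrization-plus-covering argument applied to $\{f(\cdot,z) : z \in \sZ\}$ restricted to $\sB^d(R)$, using \textbf{(A.1)}--\textbf{(A.2)} to bound $|f(\cdot,z)|$ and its Lipschitz constant on this ball by $\sigma$, yields
\begin{equation*}
\sup_{w \in \sB^d(R)} |F(w) - F_\bZ(w)| \le \sigma \sqrt{\frac{cd\log n}{n}}
\end{equation*}
on an event of probability at least $1-\delta/3$, provided $n$ satisfies the stated sample-size condition. Evaluating at $w = \barw_\bZ \in \sB^d(R)$ and chaining with the previous paragraph yields
\begin{equation*}
F(\barw_\bZ) \le F_\bZ(\barw_\bZ) + \sigma\sqrt{\frac{cd\log n}{n}} \le \min_{\eta^{-1}T_{\rm rec} \le k \le \eta^{-1}T_{\rm esc}} F_\bZ(W^{(k)}) + \sigma\sqrt{\frac{cd\log n}{n}},
\end{equation*}
which is the desired inequality.

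The main obstacle is bookkeeping: aligning the high-probability events over $\bZ$ (Mei--Bai--Montanari plus uniform concentration) with the independent high-probability event over the Langevin noise from Theorem~\ref{thm:emp_metastable}; handling the union bound over local minima; reconciling the population Hessian $H$ appearing in the statement with the empirical Hessian $H_\bZ$ driving Theorem~\ref{thm:emp_metastable}; and pinning down the exact constants in $\sigma$ and in the sample-size threshold so that the logarithmic factor absorbed into $c$ matches $1 \vee \log((M\vee L\vee (B+MR))R\sigma/\delta)$.
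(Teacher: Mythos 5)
Your proposal matches the paper's own argument: establish (via the Mei--Bai--Montanari-style uniform deviation bounds) that the empirical risk is strongly Morse and concentrates around the population risk on $\sB^d(R)$, invoke Theorem~\ref{thm:emp_metastable} for the dichotomy, and in the second alternative use the third-order Taylor bound with $\|\cdot\|_H \ge \sqrt{m}\|\cdot\|$ and $\eps \le \frac{3m^{3/2}}{2L}$ to conclude $F_\bZ(\barw_\bZ) \le \min_k F_\bZ(W^{(k)})$, then add the $\sigma\sqrt{(cd/n)\log n}$ concentration term at $\barw_\bZ$ --- exactly the paper's $E_1 + E_2$ decomposition. The bookkeeping items you flag (constant rescaling of $\eps$, empirical versus population Hessian, splitting $\delta$) are handled no more carefully in the paper itself, so this is essentially the same proof.
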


\section{Proof of Theorem~\ref{thm:emp_metastable}}

\subsection{Preliminaries: a martingale concentration lemma}

We first define some auxiliary quantities and establish a useful result for the Langevin diffusion \eqref{eq:Langevin}, along the lines of \cite{berglund_gentz_pathwise}.

We begin by linearizing the gradient $\nabla F_\bZ$ around $\barw_\bZ$: Recalling that $H = \nabla^2 F_\bZ(\barw_\bZ)$ and writing $W_t = Y_t + \barw_\bZ$, we have $\nabla F_\bZ(W_t) = HY_t - \rho(Y_t)$, where the remainder satisfies $\|\rho(Y_t)\| \le \frac{L}{2}\|Y_t\|^2$ since the Hessian of $F_\bZ$ is $L$-Lipschitz \citep[Lemma~1.2.4]{nesterov2013introductory}.  The diffusion process $\{Y_t\}^\infty_{t \ge 0}$ obeys the It\^{o} SDE
    \begin{align}\label{eq:displacement_SDE}
    	\d Y_t = -(HY_t - \rho(Y_t))\d t + \sqrt{2\beta^{-1}}\d B_t
    \end{align}
    with the initial condition $Y_0 = w - \barw_\bZ$. The solution of \eqref{eq:displacement_SDE} is given by
    \begin{align*}
    	Y_t = e^{-tH}Y_0 + \sqrt{2\beta^{-1}}\int^t_0 e^{(s-t)H}\d B_s + \int^t_0 e^{(s-t)H}\rho(Y_s) \d s,
    \end{align*}
	where $e^{A} \deq \sum^\infty_{k=0} \frac{A^k}{k!}$, for $A \in \Reals^{d \times d}$, is the matrix exponential. 

Given  $0 \le t_0 \le t_1$, let us define the matrix flow $Q_{t_0}(t) \deq H^{1/2}e^{(t_0-t)H}$ and the It\^{o} process
		\begin{align*}
			Z_t &\deq e^{(t-t_0)H}Y_t \\
			&= e^{-t_0H}Y_0 + \sqrt{2\beta^{-1}}\int^t_0 e^{(s-t_0)H}\d B_s + \int^t_0 e^{(s-t_0)H}\rho(Y_s)\d s,
		\end{align*}
	for $t \in [t_0,t_1]$. These definitions are motivated by the observation that $\| Y_t \|_H = \| H^{1/2}Y_t \| = \|Q_{t_0}(t)Z_t \|$. Next, we decompose $Z_t$ into the Ornstein--Uhlenbeck term
	    \begin{align*}
	    	Z^0_t \deq e^{-t_0H}Y_0 + \sqrt{2\beta^{-1}}\int^t_0 e^{(s-t_0)H} \d B_s
	    \end{align*}
	    and the remainder term
	    \begin{align*}
	    	Z^1_t \deq \int^t_0 e^{(s-t_0)H}\rho(Y_s)\d s.
	    \end{align*}
Note that, for any $t \in [t_0,t_1]$,
\begin{align*}
	Q_{t_0}(t_1)Z^0_t = H^{1/2}e^{-t_1H}Y_0 + \sqrt{2\beta^{-1}}\int^t_0 H^{1/2}e^{(s-t_1)H}\d B_s,
\end{align*}
so the process $\{ Q_{t_0}(t_1)Z^0_t\}_{t \in [t_0,t_1]}$ is a martingale. This leads to the following tail bound (see Appendix~\ref{app:mart_tail} for the proof):

	\begin{lemma}\label{lm:mart_tail} For any $\lambda \in (0,1/2)$, $h > 0$, and $y_0 \in \Reals^d$,
		\begin{align}\label{eq:Z0_tail_bound}
			\PP^{y_0} \left[\sup_{t_0 \le t \le t_1} \|Q_{t_0}(t_1)Z^0_t \| \ge h \right] \le \left(\frac{1}{1-\lambda}\right)^{d/2}\exp\left(-\frac{\beta\lambda}{2}\left[h^2-\ave{\mu,(I-\beta\lambda \Sigma)^{-1}\mu}\right]\right),
		\end{align}
		where
	$\mu \deq H^{1/2}e^{-t_1H}y_0$ and $\Sigma \deq \beta^{-1}(I-e^{-2t_1H})$, and $\PP^{y_0}[\cdot] \deq \PP[\cdot|Y_0=y_0]$.
	\end{lemma}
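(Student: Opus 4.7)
The plan is to apply Doob's submartingale inequality to an exponential functional of $M_t \deq Q_{t_0}(t_1)Z^0_t$, and then reduce the resulting expectation to a Gaussian chi-squared moment generating function. From the formulas for $Z^0_t$ and $Q_{t_0}(t_1)$ one reads off $M_t = \mu + \sqrt{2\beta^{-1}}\int_0^t H^{1/2}e^{(s-t_1)H}\d B_s$ with $\mu = H^{1/2}e^{-t_1 H}y_0$ deterministic. By the It\^{o} isometry its terminal covariance is $2\beta^{-1}\int_0^{t_1}He^{2(s-t_1)H}\d s = \beta^{-1}(I-e^{-2t_1 H}) = \Sigma$, so $M_{t_1}\sim\cN(\mu,\Sigma)$.

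The first step is to verify that $N_t \deq \exp(\lambda\beta\|M_t\|^2/2)$ is a non-negative submartingale on $[0,t_1]$. Writing $\phi_t \deq \sqrt{2\beta^{-1}}H^{1/2}e^{(t-t_1)H}$, so that $\d M_t = \phi_t \d B_t$, It\^{o}'s formula applied to $\|M_t\|^2$ gives $\d\|M_t\|^2 = 2\ave{M_t,\phi_t \d B_t} + \mathrm{tr}(\phi_t\phi_t^\top)\d t$, and applying it once more to the exponential produces a drift proportional to $\mathrm{tr}(\phi_t\phi_t^\top) + \lambda\beta\ave{M_t,\phi_t\phi_t^\top M_t}$, which is manifestly non-negative; hence $N_t$ is a submartingale.

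Next, Doob's maximal inequality on $[0,t_1]$ combined with the set inclusion $\{\sup_{t_0\le t\le t_1}\|M_t\|\ge h\}\subseteq\{\sup_{0\le t\le t_1}\|M_t\|\ge h\}$ yields $\PP^{y_0}[\sup_{t_0\le t\le t_1}\|M_t\|\ge h] \le e^{-\lambda\beta h^2/2}\cdot\E^{y_0}[\exp(\lambda\beta\|M_{t_1}\|^2/2)]$. The remaining MGF is the classical Gaussian chi-squared MGF: completing the square against the $\cN(\mu,\Sigma)$ density gives $\E^{y_0}[\exp(\lambda\beta\|M_{t_1}\|^2/2)] = \det(I-\lambda\beta\Sigma)^{-1/2}\exp(\tfrac{\lambda\beta}{2}\ave{\mu,(I-\lambda\beta\Sigma)^{-1}\mu})$, valid provided $I-\lambda\beta\Sigma\succ 0$.

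Finally, to produce the $(1-\lambda)^{-d/2}$ prefactor in~\eqref{eq:Z0_tail_bound}, I would use that $\beta\Sigma = I-e^{-2t_1H}$ has spectrum in $[0,1)$ (since $H\succ 0$), so the eigenvalues of $\lambda\beta\Sigma$ lie in $[0,\lambda)$; each of the $d$ factors in $\det(I-\lambda\beta\Sigma)^{-1/2}$ is therefore at most $(1-\lambda)^{-1/2}$, and the hypothesis $\lambda<1/2$ keeps us comfortably inside the domain of convergence of the MGF. I do not foresee a conceptual obstacle; the only delicate point is the It\^{o} calculation in the first step, where one must check that the term arising from the square of the diffusion coefficient enters with the correct sign so that the submartingale property holds, which is precisely what makes the Doob step applicable.
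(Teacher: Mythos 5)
Your proposal is correct and follows essentially the same route as the paper: apply Doob's maximal inequality to the exponentiated submartingale $\exp(\beta\lambda\|Q_{t_0}(t_1)Z^0_t\|^2/2)$, identify $Q_{t_0}(t_1)Z^0_{t_1}\sim\cN(\mu,\Sigma)$, evaluate the Gaussian MGF by completing the square, and bound $\det(I-\beta\lambda\Sigma)^{-1/2}$ by $(1-\lambda)^{-d/2}$ via the spectrum of $I-e^{-2t_1H}$. The only cosmetic difference is that you verify the submartingale property by an It\^{o} drift computation, whereas the paper gets it for free from conditional Jensen (a convex function of the martingale $Q_{t_0}(t_1)Z^0_t$), which sidesteps the integrability bookkeeping your It\^{o} step would require.
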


\subsection{Langevin diffusion: fast recurrence and slow escape}

We now provide the quantitative path-wise metastability result for the Langevin diffusion.

\begin{proposition}\label{prop:FR_SE} Fix any $r > 0$ and $\eps \in (0, \frac{(\sqrt{2}-1)m^2}{4L\sqrt{2M}}\wedge 8r)$, and consider
the stopping time
\begin{align*}
	\tau \deq \inf\left\{ t \ge 0: \frac{\|W_t - \barw_\bZ\|_H}{\eps + r e^{-mt}} \ge 1 \right\}.
\end{align*}
Then, for any initial point $w \in \sB^d_H(\barw_\bZ,r)$,
\begin{align*}
	\PP^{w}\left[ \tau \in [T_{\rm rec},T_{\rm esc}]\right] \le \delta,
\end{align*}
provided $\beta \ge \frac{128}{3\eps^2}\left(d +\log \frac{2MT+1}{\delta}\right)$, where $\PP^w[\cdot] \deq \PP[\cdot|W_0 = w]$.
\end{proposition}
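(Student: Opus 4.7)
The plan is to use the Berglund--Gentz-style decomposition introduced above and verify that, with high probability, the noise cannot push $\|Y_t\|_H$ outside the tube $\eps + re^{-mt}$ during $[T_{\rm rec}, T_{\rm esc}]$. Since $\{\tau \in [T_{\rm rec}, T_{\rm esc}]\}$ forces $\tau \ge T_{\rm rec}$, stopping at $\tau$ gives the a~priori bound $\|Y_{s \wedge \tau}\|_H \le \eps + re^{-ms}$, hence $\|Y_{s\wedge\tau}\|^2 \le (\eps + re^{-ms})^2/m$ for all $s$, using $mI \preceq H$. Partition $[T_{\rm rec}, T_{\rm esc}]$ into $N \le 2MT + 1$ subintervals $[t_{k-1}, t_k]$ of length $\Delta = 1/(2M)$ (so that $\Delta M \le 1/2$), and on each invoke $\|Y_t\|_H \le \|Q_{t_{k-1}}(t) Z^0_t\| + \|Q_{t_{k-1}}(t) Z^1_t\|$ with $t_0 := t_{k-1}$.

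For the remainder, combine $\|\rho(Y)\| \le (L/2)\|Y\|^2$ with the spectral bound $\|H^{1/2} e^{(s-t)H}\| \le \sqrt{M}\, e^{-m(t-s)}$ (valid since $mI \preceq H \preceq MI$) to obtain
\begin{align*}
\|Q_{t_{k-1}}(t) Z^1_{t\wedge\tau}\| \le \frac{L\sqrt{M}}{2m} \int_0^t e^{-m(t-s)} (\eps + re^{-ms})^2\, \d s;
\end{align*}
for $t \in [T_{\rm rec}, T_{\rm esc}]$ the decay $re^{-mt} \le \eps/8$ collapses the integrand so that the integral is $O(\eps^2/m^2)$, and the smallness assumption $\eps < (\sqrt{2}-1)m^2/(4L\sqrt{2M})$ then forces $\|Q_{t_{k-1}}(t) Z^1_{t \wedge \tau}\| \le \eps/4$. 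For the martingale term, use $Q_{t_{k-1}}(t) = e^{(t_k-t)H} Q_{t_{k-1}}(t_k)$ together with $\|e^{(t_k-t)H}\| \le e^{(t_k-t_{k-1})M} \le e^{1/2}$ to transfer the supremum to the process $Q_{t_{k-1}}(t_k) Z^0_t$; Lemma~\ref{lm:mart_tail} with $\lambda = 1/4$, $h = \eps/4$, and the observations $\|\mu\| \le re^{-mt_k} \le \eps/8$ and $\|\Sigma\| \le 1/\beta$ yields a sub-Gaussian per-subinterval tail of the form $(4/3)^{d/2} \exp(-c\beta\eps^2)$ for an absolute constant $c$. A union bound over the $N \le 2MT+1$ subintervals makes the total failure probability $\le \delta$ precisely under the stated lower bound on $\beta$.

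The main obstacle is the remainder estimate: a direct bound via $\|Y_s\|^2 \le (\eps + re^{-ms})^2/m$ on all of $[0, t]$ yields a contribution of order $L\sqrt{M}\, r^2 e^{-mt}/m^2$, which cannot be absorbed into $re^{-mt}/4$ when $r$ is arbitrary. The saving observation is that one only needs to control exits during $[T_{\rm rec}, T_{\rm esc}]$, and for $t$ in this window $re^{-mt} \le \eps/8$, so $r^2 e^{-2mt} = O(\eps^2)$: the exponential kernel absorbs the $r^2$ growth from the initial recurrence stretch. Equivalently, once past the recurrence time the tube is essentially of width $\eps$ and behaves as a state-independent barrier regardless of the initial displacement, which is precisely the structural reason the conclusion is phrased over $[T_{\rm rec}, T_{\rm esc}]$ and not $[0, T_{\rm esc}]$.
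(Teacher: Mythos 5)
Your proposal follows the paper's own route: the same decomposition $Z_t=Z^0_t+Z^1_t$, the a priori tube bound on $\|Y_s\|$ up to the stopping time to control the remainder, the $e^{1/2}$ transfer from $Q_{t_{k-1}}(t)$ to the martingale $Q_{t_{k-1}}(t_k)Z^0_t$, Lemma~\ref{lm:mart_tail} on each window of length $1/(2M)$, and a union bound over at most $2MT+1$ windows. Two quantitative points need repair, though. First, in the remainder estimate the convolution produces the term $r^2e^{-mt}/m$ (a \emph{single} power of $e^{-mt}$), not $r^2e^{-2mt}$: your ``saving observation'' $re^{-mt}\le\eps/8\Rightarrow r^2e^{-2mt}=O(\eps^2)$ bounds the wrong quantity, and on its own it only gives $r^2e^{-mt}\le r\eps/8$, which is useless for large $r$. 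What actually closes the argument is the factor $2/m$ in the definition $T_{\rm rec}=\frac{2}{m}\log\frac{8r}{\eps}$, which gives $e^{-mT_{\rm rec}}=(\eps/8r)^2$ and hence $r^2e^{-mt}\le\eps^2/64$ for all $t\ge T_{\rm rec}$; with that, your conclusion $\|Q_{t_{k-1}}(t)Z^1_{t\wedge\tau}\|\le\eps/4$ is correct (the paper gets the bound $\frac{2L\sqrt M}{m^2}\eps^2<\eps/4$ in exactly this way).

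Second, the choices $\lambda=1/4$ and $h=\eps/4$ do not reach the stated constant. With $\|\mu\|\le\eps/8$ and $(I-\beta\lambda\Sigma)^{-1}$ bounded by $\frac{4}{3}$ in norm, the bracket in Lemma~\ref{lm:mart_tail} is at least $\frac{\eps^2}{16}-\frac{\eps^2}{48}=\frac{\eps^2}{24}$, and $\frac{\beta\lambda}{2}=\frac{\beta}{8}$, so your per-window tail is $(4/3)^{d/2}\exp(-\beta\eps^2/192)$; the union bound then requires $\beta\ge\frac{192}{\eps^2}\big(\tfrac{d}{2}\log\tfrac43+\log\tfrac{2MT+1}{\delta}\big)$, which is \emph{not} implied by $\beta\ge\frac{128}{3\eps^2}\big(d+\log\frac{2MT+1}{\delta}\big)$ once the logarithmic term dominates (e.g.\ $T=e^d$ or small $\delta$): the coefficient on the log term is $192$ versus $128/3$. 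So the claim that the stated lower bound on $\beta$ suffices does not follow from your parameter choices. To prove the proposition as stated, exploit the full strength of the exit event: it forces $\sup_{t}\|Q_{t_{k-1}}(t_k)Z^0_t\|\ge c_0(\eps+re^{-mt_k})$ with $c_0=\frac12-\frac{2L\sqrt M}{m^2}\eps\ge\frac{1}{2\sqrt2}$ (this is precisely what the upper bound $\eps<\frac{(\sqrt2-1)m^2}{4L\sqrt{2M}}$ guarantees), and take $\lambda=3/4$ in Lemma~\ref{lm:mart_tail}; this gives the per-window bound $2^d\exp(-3\beta\eps^2/128)$ and hence the stated threshold for $\beta$ after the union bound.
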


\begin{proof}
Since $\|Y_0\|_H \le r$, we know that $\tau > 0$. Fix some $T_{\rm rec} \le t_0 \le t_1$, such that $t_1 - t_0 \le \frac{1}{2M}$. For every $t \in [t_0,t_1]$, $Y_t = W_t - \barw_\bZ$ satisfies
\begin{align*}
	\|Y_t\|_H = \|Q_{t_0}(t)Z_t\| = \|e^{(t_1-t)H}Q_{t_0}(t_1)Z_t\| \le e^{1/2}\|Q_{t_0}(t_1)Z_t\|.
\end{align*}
Therefore,
\begin{align*}
	&\PP^w\left[\tau \in [t_0,t_1]\right] \nonumber\\
	&= \PP^{y_0}\left[ \sup_{t_0 \le t \le t_1 \wedge \tau}\frac{\|Y_t\|_H}{\eps + r e^{-mt}} \ge 1, \, \tau \ge t_0 \right] \\
	&\le \PP^{y_0}\left[ \sup_{t_0 \le t \le t_1 \wedge \tau}\frac{\|Q_{t_0}(t_1)Z_t\|}{\eps + r e^{-mt}} \ge \frac{1}{2}, \, \tau \ge t_0 \right] \\
	&\le \PP^{y_0}\left[ \sup_{t_0 \le t \le t_1 \wedge \tau}\frac{\|Q_{t_0}(t_1)Z^0_t\|}{\eps + r e^{-mt}} \ge c_0, \, \tau \ge t_0 \right] + \PP^{y_0}\left[ \sup_{t_0 \le t \le t_1 \wedge \tau}\frac{\|Q_{t_0}(t_1)Z^1_t\|}{\eps + r e^{-mt}} \ge c_1, \, \tau \ge t_0 \right] \\
	&=: P_0 + P_1,
\end{align*}
for any choice of $c_0,c_1 > 0$ satisfying $c_0 + c_1 = \frac{1}{2}$.

	We first upper-bound $P_1$.  On the event $\tau \in [t_0, t_1]$, for any $0\leq s \le t_1 \wedge \tau$, $\| \rho(Y_s) \| \leq \frac{L}{2}\| Y_s\|^2 \leq \frac{L}{2m} (\eps + re^{-m s})^2$. Therefore, for any $t \in [t_0,t_1 \wedge \tau]$,
	    \begin{align*}
	    	\|Q_{t_0}(t_1)Z^1_t\| &\le \|H^{1/2}\|\int^t_0 \|e^{(s-t_1)H}\rho(Y_s)\|\d s \\
			&\le \|H^{1/2}\| \int^t_0 \|e^{(s-t_1)H}\| \|\rho(Y_s)\| \d s \\
			&\le \frac{L\sqrt{M}}{2m} \int^t_0 e^{(s-t_1)m} (\eps + re^{-m s})^2 \d s \\
			&\le \frac{L\sqrt{M}}{m^2} \cdot(\eps^2 + r^2 e^{-mt}) \\
			&< \frac{2L\sqrt{M}}{m^2}\eps^2
\end{align*}
since $t \ge t_0 \ge T_{\rm rec}$. Consequently, if we take $c_1 = \frac{2L\sqrt{M}}{m^2}\eps$, then
\begin{align*}
	\sup_{t_0 \le t \le t_1 \wedge \tau} \frac{\|Q_{t_0}(t_1)Z_t\|}{\eps + re^{-mt}} \le \frac{1}{\eps}\sup_{t_0 \le t \le t_1 \wedge \tau} \|Q_{t_0}(t_1)Z_t\| < c_1,
\end{align*}
which implies $P_1 = 0$. Moreover, $c_0 = \frac{1}{2}-c_1 = \frac{1}{2} - \frac{2L\sqrt{M}}{m^2}\eps > \frac{1}{4}$.

We next estimate $P_0$. Since $\|y_0\|^2_H \le r^2$,
	\begin{align*}
	& \Ave{H^{1/2}y_0,e^{-t_1H}\left(I - \frac{3}{4}(I-e^{-2t_1H})\right)^{-1}e^{-t_1H}H^{1/2}y_0} \nonumber\\
	&\qquad \le \left\| e^{-2t_1H} \left(I - \frac{3}{4}(I-e^{-2t_1H})\right)^{-1} \right\| \|y_0 \|^2_H \\
	&\qquad \le  4e^{-2m t_1} r^2 \le\eps^2/16.
	\end{align*}
Therefore, by Lemma~\ref{lm:mart_tail} with $h= (\eps + r e^{-mt_1})c_0$ and $\lambda = \frac{3}{4}$,
	\begin{align*}
		P_0 &  \leq \PP^{y_0}\left[\sup_{t_0 \le t \le t_1} \frac{\|Q_{t_0}(t_1)Z_t^0 \|}{\eps + r e^{-mt}} \ge c_0 \right]  \\
		& \leq \PP^{y_0}\left[\sup_{t_0 \le t \le t_1} \|Q_{t_0}(t_1)Z_t^0 \| \ge (\eps + r e^{-mt_1})c_0 \right] \\
		& \leq 2^d \exp\left( -\frac{3\beta\eps^2}{8}\left(c_0^2 - \frac{1}{16}\right)\right) \\
		& \le 2^d \exp\left(-\frac{3\beta\eps^2}{128}\right).
	\end{align*}
Thus, for any $t_0 \ge T_{\rm rec}$ and $t_0 \le t_1 \le t_0 + \frac{1}{2M}$,
\begin{align}\label{eq:nohit_small_timescale}
	\PP^{w}\left[ \tau \in [t_0,t_1]\right] \le 2^d \exp\left(-\frac{3\beta\eps^2}{128}\right).
\end{align}
Now fix an arbitrary $T > 0$ and recall the definition of the escape time $T_{\rm esc} = T + T_{\rm rec}$. Let $J:=\lceil 2MT \rceil$ and partition the interval $[0,T_{\rm esc}]$ using the points $T_{\rm rec} = t_0 < t_1 < \ldots < t_J = T_{\rm esc}$ with $t_j = \frac{j}{2M}$ for $j = 0, 1,\ldots, J-1$. Then, using the union bound and Eq.~\eqref{eq:nohit_small_timescale}, we obtain
	\begin{align*}
		\PP^{w}\left[ \tau \in [T_{\rm rec}, T_{\rm esc}]  \right] &= \sum^{J-1}_{j=0} \PP^{w}\left[\tau \in [t_j,t_{j+1}]\right] \\
		& \leq (2M T +1) 2^d \exp\left(-\frac{3\beta\eps^2}{128}\right),
	\end{align*}
and this probability will be smaller than $\delta$ if we choose $\beta$ accordingly.
\end{proof}

\subsection{A path coupling argument}

Next, we relate the discrete Langevin algorithm \eqref{eq:discrete_Langevin} to the diffusion \eqref{eq:Langevin} sampled at $t=0,\eta,2\eta,\ldots$. To that end, we introduce the standard continuous-time interpolation 
\begin{align*}
	V_t \deq V_0 - \int^t_0 \nabla F_\bZ(V_{\lfloor s/\eta \rfloor\eta}) \d s + \sqrt{2\beta^{-1}}\int^t_0 \d B_s,
\end{align*}
with the deterministic initial condition $V_0 = W^{(0)} = w$. Note that, for any $K$, the probability law $\mu_K$ of the random vector $(W^{(1)},\ldots,W^{(K)})$ is equal to the probability law $\nu_K$ of the random vector $(V_\eta,\ldots,V_{K\eta})$. 

Let us denote by $\PP^t_V$ and $\PP^t_W$ the probability laws of $(V_s: 0 \le s \le t)$ and $(W_s: 0 \le s \le t)$, respectively, with the same deterministic initialization $V_0 = W_0 = w$. Then we can invoke Girsanov's theorem in the same way as in \cite{dalalyan2017theoretical} and \cite*{rrt_colt17} to show that, for any $K \in \Naturals$,
\begin{align}\label{eq:KL_bound_1}
	D\big(\PP^{K\eta}_V \big\| \PP^{K\eta}_W \big) \le \frac{\beta M^2\eta^3}{2}\sum^{K-1}_{k=0} \E^{w} \|\nabla F_\bZ(W^{(k)})\|^2 + KdM^2\eta^2,
\end{align}
where $D(\cdot\|\cdot)$ denotes the relative entropy. The expectations of the squared norms of the gradients can be upper-bounded using Lemmas~2 and 3 of \cite*{rrt_colt17}: under {\bf (A.1)}--{\bf (A.3)}, $\sup_{k \ge 0} \E^w \| \nabla F_\bZ(W^{(k)})\|^2 \le  G_0$ for all $\eta < 1 \wedge \frac{m}{M^2}$, where $G_0$ is defined in \eqref{eq:grad_bound}. Therefore, 
\begin{align*}
	D(\mu_K \| \nu_K) \le D\big(\PP^{K\eta}_V \big\| \PP^{K\eta}_W \big) \le M^2\left(\frac{\beta G_0}{2}+d\right)K\eta^2,
\end{align*}
where the first step is by the data processing inequality for the relative entropy, and the second one follows upon substituting the gradient bound into \eqref{eq:KL_bound_1}. Using Pinsker's inequality, we obtain
\begin{align*}
	\|\mu_K - \nu_K\|^2_{\rm TV} \le  \frac{M^2}{2}\left(\frac{\beta G_0}{2}+d\right)K\eta^2.
\end{align*}
Now we recall the following result about an optimal coupling \citep[Theorem~5.2]{Lindvall_coupling}: Given any two random elements $X,Y$ of a common standard Borel space $\sX$, there exists a coupling $\MM$ of $X$ and $Y$, i.e., a probability measure $\MM$ on the product space $\sX \times \sX$, such that $\MM(\cdot \times \sX) = \cL(X)$, $\MM(\sX \times \cdot) = \cL(Y)$, and 
\begin{align*}
	\MM[X \neq Y] \le \|\cL(X)-\cL(Y)\|_{\rm TV}
\end{align*}
Hence, given any $\beta > 0$ and any $K \in \Naturals$ satisfying $K\eta \le T_{\rm esc}$, we can choose $\eta \le \frac{4\delta^2}{M^2(\beta G_0 + 2d)T_{\rm esc}}$ to ensure that there exists a coupling $\MM$ of $(W^{(k)} : k \in [K])$ and $(W_{k\eta} : k \in [K])$, such that
\begin{align*}
	\MM\big((W^{(1)},W^{(2)},\ldots,W^{(K)}) \neq (W_\eta,W_{2\eta},\ldots,W_{K\eta})\big) \le \delta.
\end{align*}
In that case, we have 
\begin{align}\label{eq:path_coupling}
&	\PP\big((W^{(1)},\ldots,W^{(K)}) \in \cdot \big) \nonumber\\
& \qquad = \MM\big((W^{(1)},\ldots,W^{(K)}) \in \cdot \big)  \nonumber\\
& \qquad \le \MM\big((W_\eta,\ldots,W_{K\eta}) \in \cdot \big) + \MM\big( (W^{(1)},\ldots,W^{(K)}) \neq (W_1,\ldots,W_{K\eta})\big) \nonumber\\
& \qquad \le \PP\big((W_\eta,\ldots,W_{K\eta}) \in \cdot  \big)  + \delta.
\end{align}

\subsection{Completing the proof}

Consider the discrete-time Langevin algorithm \eqref{eq:discrete_Langevin}. We need to upper-bound the probability of the event $A = A_1 \cap A_2$, where, for $K = \lfloor\eta^{-1}T_{\rm esc}\rfloor$,
\begin{align*}
A_1 &\deq \left\{ (w^{(1)},\ldots,w^{(K)}) \in \Reals^d \times \ldots \times \Reals^d : \max_{k \le \eta^{-1}T_{\rm rec}} \frac{\|w^{(k)}-\barw_\bZ\|_H}{\eps + r e^{-mk\eta}}\le \frac{1}{2}\right\}, \\
A_2 &\deq \left\{ (w^{(1)},\ldots,w^{(K)}) \in \Reals^d \times \ldots \times \Reals^d : \max_{\eta^{-1}T_{\rm rec} \le k \le K} \frac{\|w^{(k)}-\barw_\bZ\|_H}{\eps + r e^{-mk\eta}} \ge 1 \right\}.
\end{align*}
By Proposition~\ref{prop:FR_SE}, we can choose $\beta > 0$ large enough, so that, with probability at least $1-\delta/3$, for the continuous-time Langevin diffusion \eqref{eq:Langevin} we have either $\|W_t - \barw_\bZ\|_H \ge \eps + re^{-mt}$ for some $t \le T_{\rm esc}$ or $\|W_t - \barw_\bZ\|_H \le \eps + re^{-mt}$ for all $t  \le T_{\rm esc}$. Moreover, for any $K$, $\eta$, and $\beta$ satisfying the conditions of the theorem, there exists a coupling $\MM$ of $(W^{(1)},\ldots,W^{(K)})$ and $(W_\eta,\ldots,W_{K\eta})$, such that, with probability $1-\delta/3$, $W^{(k)}=W_{k\eta}$ for all $k \in [K]$. Then,
using the path coupling estimate \eqref{eq:path_coupling}, we can write
\begin{align*}
	\PP^w[(W^{(1)},\ldots,W^{(K)}) \in A] \le \PP^w[(W_\eta,\ldots,W_{K\eta}) \in A] + \frac{\delta}{3}.
\end{align*}
It remains to estimate the probability of ${A}_1 \cap {A}_2$ for the Langevin diffusion \eqref{eq:Langevin}. For $K_0 \deq \lceil T_{\rm rec}\eta^{-1}\rceil$, partition the interval $[0,T_{\rm rec}]$ using the points $0 = t_0 < t_1 < \ldots < t_{K_0} = T_{\rm rec}$ with $t_k = k\eta$ for $k = 0,1,\ldots,K_0-1$, and consider the event
\begin{align*}
	B \deq \left\{ \max_{0 \le k \le K_0-1} \max_{t \in [t_k,t_{k+1}]}  \|W_t - W_{t_{k+1}}\| \le \frac{\eps}{2\sqrt{M}}\right\}.
\end{align*}
On the event $(W_\eta,\ldots,W_{K\eta}) \in A_1 \cap B$,
\begin{align*}
	\sup_{t \in [0,T_{\rm rec}]} \frac{\|W_t - \barw_\bZ\|_H}{\eps + re^{-ms}} &= \max_{0 \le k \le K_0-1} \sup_{t \in [t_k,t_{k+1}]} \frac{\|W_t - \barw_\bZ\|_H}{\eps + re^{-mt}} \\
	&\le \frac{1}{2} + \max_{0 \le k \le K_0-1} \max_{t \in [t_k,t_{k+1}]} \frac{\sqrt{M}}{\eps} \|W_t - W_{t_{k+1}}\| \le 1,
\end{align*}
hence
\begin{align*}
	\PP^w[(W_\eta,\ldots,W_{K\eta}) \in A] &\le \PP^w[(W_\eta,\ldots,W_{K\eta}) \in A \cap B] + \PP^w[(W_\eta,\ldots,W_{K\eta}) \in B^c] \\
	&\le \PP^w\left[ \tau \in [T_{\rm rec}, T_{\rm esc}]\right] + \PP^w[(W_\eta,\ldots,W_{K\eta}) \in B^c] \\
	&\le \frac{\delta}{3} + \PP^w[(W_\eta,\ldots,W_{K\eta}) \in B^c].
\end{align*}
To complete the proof, we need to arrange  $\PP^w[(W_\eta,\ldots,W_{K\eta}) \in B^c] \le \frac{\delta}{3}$. For $t \in [t_k,t_{k+1}]$,
\begin{align*}
	\|W_t - W_{t_{k+1}}\| &\le \int^{t_{k+1}}_t \| \nabla F_\bZ(W_s)\| \d s + \sqrt{2\beta^{-1}} \|B_t - B_{t_{k+1}}\| \\
	&\le M\int^{t_{k+1}}_t \|W_s - W_{t_{k+1}}\|\d s + M\eta\|W_{t_{k+1}}\| + \sqrt{2\beta^{-1}}\|B_t - B_{t_{k+1}}\|.
\end{align*}
Therefore, using Gronwall's lemma, we obtain
\begin{align*}
	\sup_{t \in [t_k,t_{k+1}]} \|W_t - W_{t_{k+1}}\| \le e^{M\eta}\left[M\eta \|W_{t_{k+1}}\| + \sqrt{2\beta^{-1}}\sup_{t \in [t_k,t_{k+1}]}\|B_t - B_{t_{k+1}}\|\right].
\end{align*}
By Lemma~3 in \cite*{rrt_colt17} and Markov's inequality, for any $u > 0$
\begin{align}\label{eq:Markov_term}
	\PP^w\left[\|W_{t_{k+1}}\| \ge u \right] \le \frac{\sup_{t \ge 0}\E^w\|W_t\|^2}{u^2} \le \frac{G_1}{u^2},
\end{align}
with $G_1$ in \eqref{eq:grad_bound}. By the reflection principle for the Brownian motion \citep{morters2010brownian},
\begin{align}
	\PP\left[ \sup_{t \in [t_k,t_{k+1}]} \|B_t - B_{t_{k+1}}\| \ge u \right] \le 2de^{-cu^2/d\eta}
\end{align}
for some absolute constant $c > 0$. Therefore,
\begin{align}
&	\PP^w[(W_\eta,\ldots,W_{K\eta}) \in B^c] \nonumber\\
&\le \sum^{K_0-1}_{k=0} \left(\PP^w\left[ \|W_{t_{k+1}}\| \ge \frac{\eps e^{-M\eta}}{4M^{3/2}\eta} \right] + \PP\left[\sup_{t \in [t_k,t_{k+1}]} \|B_t - B_{t_{k+1}}\| \ge \eps e^{-M\eta}\sqrt{\frac{\beta}{32M}}\right]\right) \nonumber\\
	&\le K_0 \left( \frac{16G_1M^3\eta^2}{\eps^2} e^{2M\eta} + 2d \exp\left(-\frac{c'\beta\eps^2}{Md\eta}e^{-2M\eta}\right) \right)\label{eq:PBc}
\end{align}
for some absolute constant $c' > 0$. We can first choose $\eta > 0$ small enough to make the first term in \eqref{eq:PBc} smaller than $\delta/6$, and then $\beta > 0$ large enough to make the second term smaller than $\delta/6$.

\section{Proof of Theorem~\ref{thm:margin-generalization}}

\subsection{Preliminaries}

The following lemma can be proved using the methodology of \cite*{mei2016landscape}; we give a self-contained proof in Appendix~\ref{app:unif_conv} for completeness.

\begin{lemma}[uniform deviation guarantees]\label{lm:unif_conv} Under Assumptions~{\bf (A.1)} and {\bf(A.2)}, there exists an absolute constant $c_0$, such that the following holds, for $c = c_0 \left(1 \vee \log((M\vee L\vee (B+MR))R\sigma/\delta)\right)$ and $\sigma  \deq (A+(B+MR)R) \vee (B + MR) \vee (C + LR)$:
\begin{itemize}
	\item If $n \ge cd\log d$, then, with probability at least $1-\delta$,
	\begin{align}\label{eq:risk_concentration}
	\sup_{w \in \sB^d(R)} |F_\bZ(w)-F(w)| \le \sigma \sqrt{\frac{cd\log n}{n}}.
	\end{align}
	\item If $n \ge cd\log d$, then, with probability at least $1-\delta$,
	\begin{align}\label{eq:uniform_grad_conv}
		\sup_{w \in \sB^d(R)}\| \nabla F_\bZ(w) - \nabla F(w)\| \le \sigma \sqrt{\frac{cd\log n}{n}}.
	\end{align}
	\item If $n \ge cd\log d$, then, with probability at least $1-\delta$,
	\begin{align}
	\sup_{w \in \sB^d(R)} \| \nabla^2 F_\bZ(w) - \nabla^2 F(w)\| \le \sigma \sqrt{\frac{cd \log n}{n}}.
	\end{align}
\end{itemize}
\end{lemma}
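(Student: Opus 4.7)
The plan is to prove all three uniform bounds through the same template: a Hoeffding-type concentration at a single point, followed by an $\eps$-net argument on $\sB^d(R)$ together with Lipschitz continuity of the empirical deviation, with the net size tuned to absorb the dimension factor into a mild $\log n$ term. The full ingredient list, which I will establish in turn, is (i) uniform boundedness of $f$, $\nabla f$, $\nabla^2 f$ on $\sB^d(R)$, (ii) Lipschitz constants of $w \mapsto f(w,z)$, $w \mapsto \nabla f(w,z)$, $w \mapsto \nabla^2 f(w,z)$, and (iii) a scalar Hoeffding inequality applied to the appropriate random variable.

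First I would establish the pointwise a priori bounds. By Assumption~\textbf{(A.1)} and integration along the segment from $0$ to $w$ using Assumption~\textbf{(A.2)}, for any $w \in \sB^d(R)$ and $z \in \sZ$ we get $\|\nabla^2 f(w,z)\| \le C + LR$, $\|\nabla f(w,z)\| \le B + MR$ and $|f(w,z)| \le A + (B+MR)R$. These three bounds define the three components of the scalar $\sigma$. As a by-product, on $\sB^d(R)$ the map $w \mapsto f(w,z)$ is $(B+MR)$-Lipschitz, $w \mapsto \nabla f(w,z)$ is $M$-Lipschitz, and $w \mapsto \nabla^2 f(w,z)$ is $L$-Lipschitz, all uniformly in $z$. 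The same Lipschitz constants then govern the centered differences $F_\bZ - F$, $\nabla F_\bZ - \nabla F$, $\nabla^2 F_\bZ - \nabla^2 F$ on $\sB^d(R)$.

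Next I would run the covering argument separately for the three statements. Let $\sN_\eps \subset \sB^d(R)$ be a minimal $\eps$-net, of cardinality at most $(1 + 2R/\eps)^d$. For the scalar bound~\eqref{eq:risk_concentration}, Hoeffding's inequality gives, for each fixed $w \in \sN_\eps$, $\PP\{|F_\bZ(w) - F(w)| > t\} \le 2\exp(-nt^2/(2\sigma^2))$; a union bound over $\sN_\eps$ and a Lipschitz extension (with radius $\eps$) picks up an additional $2(B+MR)\eps$, which is dominated by the target rate once $\eps$ is chosen of order $1/n$. This produces the $\sigma\sqrt{cd \log n/n}$ rate with $c$ absorbing $\log((B+MR)R\sigma/\delta)$, matching the definition in the statement. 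For the gradient bound~\eqref{eq:uniform_grad_conv}, I would dualize $\|\nabla F_\bZ(w) - \nabla F(w)\| = \sup_{u \in \sB^d(1)} \langle u,\nabla F_\bZ(w) - \nabla F(w)\rangle$, take a $1/4$-net $\sN'$ of the unit sphere with $|\sN'| \le 9^d$, and apply Hoeffding to the bounded scalar $\langle u, \nabla f(w,z)\rangle$; the extra $d$ in the exponent from $\sN \times \sN'$ is still of the right order. For the Hessian bound, I would reuse the same $\eps$-net on $\sB^d(R)$ and the sphere quadratic-form trick, namely $\|\nabla^2 F_\bZ(w) - \nabla^2 F(w)\| \le 2 \sup_{u \in \sN'} |\langle u,(\nabla^2 F_\bZ(w) - \nabla^2 F(w))u\rangle|$ for an appropriately fine net of the sphere, then Hoeffding on $\langle u, \nabla^2 f(w,z)u\rangle$, which is bounded by $C+LR$.

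Finally, I would choose $\eps$ of order $1/n$ in each of the three cases so the Lipschitz extension error is of smaller order than the probabilistic term; this introduces a $\log(nR)$ factor which, together with the $\log(1/\delta)$ and the $d$ from the covering cardinality, is absorbed into the universal constant $c = c_0 (1 \vee \log((M\vee L\vee (B+MR))R\sigma/\delta))$ appearing in the statement. The sample size condition $n \ge cd \log d$ is what ensures the resulting bound is nontrivial ($\sqrt{cd \log n /n} \le 1$ in the regime of interest) and that the $\log(n)$ factor dominates $\log(R/\eps)$. The main technical obstacle is bookkeeping: one must be careful that the three bounds can be written with the \emph{same} $\sigma$ (since Hoeffding's constant depends on the relevant sup-norm), and that the logarithmic factors from the covering, the sphere discretization, the Lipschitz extension, and the union bound over $\delta$ can all be folded into the single $\log$ inside the definition of $c$. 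Everything else is standard.
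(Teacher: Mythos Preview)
Your proposal is correct and follows essentially the same route as the paper's proof: establish uniform boundedness (hence subgaussianity via Hoeffding's lemma) of $f$, $\langle u,\nabla f\rangle$, and $\langle u,\nabla^2 f\,u\rangle$ on $\sB^d(R)$; discretize $\sB^d(R)$ (and, for the gradient and Hessian, also the unit sphere) with an $\eps$-net; apply a union bound plus Lipschitz extension; and take $\eps$ of order $1/n$ so the extension error is dominated. The only differences are cosmetic choices of net resolution (the paper uses a $1/2$-cover of the sphere for the gradient and a $1/4$-cover for the Hessian, with covering numbers $6^d$ and $12^d$), which do not affect the argument.
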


\subsection{Empirical risk is strongly Morse with high probability}

The following result is due to \cite*{mei2016landscape}. We give a short proof, to keep the presentation self-contained.

\begin{proposition}\label{prop:strongly_Morse} If the population risk $F(w)$ is $(2\eps_0,2m)$-strongly Morse, then, provided $n$ satisfies the conditions of Lemma~\ref{lm:unif_conv} as well as the condition $\dfrac{n}{d\log n} \ge \dfrac{c\sigma^2}{(\eps_0 \wedge m)^2}$, the empirical risk $F_\bZ(w)$ is $(\eps_0,m)$-strongly Morse with probability at least $1-\delta$.
\end{proposition}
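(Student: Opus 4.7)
The plan is to combine the three uniform deviation bounds of Lemma~\ref{lm:unif_conv} in a straightforward pointwise comparison, with the dissipativity assumption {\bf(A.3)} used to confine the argument to the ball $\sB^d(R)$ where those bounds are valid. First, I would condition on the joint event that both the uniform gradient bound \eqref{eq:uniform_grad_conv} and the uniform Hessian bound hold; by a union bound this event has probability at least $1 - 2\delta$ (the constants can be absorbed into $c_0$), and by the hypothesis $n/\log n \ge c\sigma^2 d/(\eps_0 \wedge m)^2$, both deviations are bounded by $\eps_0 \wedge m$.

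Next, fix an arbitrary $w$ with $\|\nabla F_\bZ(w)\| \le \eps_0$; we need to show $w \in \sB^d(R)$ so that the uniform bounds apply. This is where dissipativity enters: Cauchy--Schwarz gives $\eps_0 \|w\| \ge \ave{w,\nabla F_\bZ(w)} \ge m\|w\|^2 - b$, so $\|w\|$ is bounded by roughly $\sqrt{b/m} = R$ for sufficiently small $\eps_0$. A small inflation of the radius (or a tighter form of the dissipativity consequence already noted in the paper immediately after {\bf(A.3)}) ensures $w \in \sB^d(R)$, so the uniform bounds may be invoked at $w$. By the triangle inequality and \eqref{eq:uniform_grad_conv},
\begin{align*}
\|\nabla F(w)\| \le \|\nabla F_\bZ(w)\| + \sigma\sqrt{\frac{cd\log n}{n}} \le \eps_0 + (\eps_0 \wedge m) \le 2\eps_0,
\end{align*}
so the $(2\eps_0,2m)$-strongly Morse property of $F$ forces $\min_{j\in[d]} |\lambda_j(\nabla^2 F(w))| \ge 2m$.

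The last step transfers this eigenvalue lower bound to the empirical Hessian via Weyl's perturbation inequality: the uniform Hessian deviation gives $\|\nabla^2 F_\bZ(w) - \nabla^2 F(w)\| \le \sigma\sqrt{cd\log n/n} \le m$, so each eigenvalue of $\nabla^2 F_\bZ(w)$ lies within $m$ of a corresponding eigenvalue of $\nabla^2 F(w)$. Hence $\min_{j\in[d]} |\lambda_j(\nabla^2 F_\bZ(w))| \ge 2m - m = m$, which is precisely the $(\eps_0,m)$-strongly Morse condition for $F_\bZ$ at $w$. Since $w$ was arbitrary in the sublevel set $\{w : \|\nabla F_\bZ(w)\| \le \eps_0\}$, we conclude.

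The main obstacle I anticipate is the bookkeeping needed to certify $w \in \sB^d(R)$ whenever $\|\nabla F_\bZ(w)\| \le \eps_0$: the dissipativity inequality cleanly traps \emph{critical} points in $\sB^d(R)$, but near-critical points require a small quantitative margin. In practice this is handled either by strengthening the uniform deviation lemma to hold on a slightly enlarged ball $\sB^d(R')$ with $R' = R + O(\eps_0/m)$, or by absorbing the margin into the choice of constant $c_0$. The remainder of the argument is a transparent combination of the uniform deviation lemma, the definition of strongly Morse, and Weyl's inequality.
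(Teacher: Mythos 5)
Your proposal is correct and follows essentially the same route as the paper: condition on the joint uniform-deviation event from Lemma~\ref{lm:unif_conv} (gradient deviation at most $\eps_0$, Hessian deviation at most $m$ under the stated sample-size condition), pass from $\|\nabla F_\bZ(w)\|\le\eps_0$ to $\|\nabla F(w)\|\le 2\eps_0$ by the triangle inequality, and transfer the eigenvalue bound from $\nabla^2 F(w)$ to $\nabla^2 F_\bZ(w)$ via Weyl's perturbation theorem. The only divergence is your confinement step: the paper verifies the strongly Morse condition only for $w\in\sB^d(R)$ (the convention of \cite{mei2016landscape}, where the property is relative to a compact ball), so the dissipativity argument and the slightly enlarged ball $R+O(\eps_0/m)$ you introduce are not needed there, though your fix of running Lemma~\ref{lm:unif_conv} on the enlarged ball is sound and only changes constants.
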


\begin{proof} 
By Lemma~\ref{lm:unif_conv}, the following event occurs with probability at least $1-\delta$:
	\begin{align*}
		\sup_{w \in \sB^d(R)} \|\nabla F_\bZ(w)-\nabla F(w)\| \le \eps_0 \qquad \text{and} \qquad
	 \sup_{w \in \sB^d(R)} \| \nabla^2 F_\bZ(w) - \nabla^2 F(w)\| \le m.
	\end{align*}
	On this event, for any $w \in \sB^d(R)$ satisfying $\min_{j \le d}|\lambda_j(\nabla^2 F(w))| \ge 2m$, 
	\begin{align*}
		\min_{j \le d}|\lambda_j(\nabla^2 F_\bZ(w))| &\ge \min_{j \le d}|\lambda_j \nabla^2 F(w)| - \max_{j \le d}|\lambda_j(\nabla^2 F_\bZ(w))-\lambda_j(\nabla^2 F(w))| \nonumber\\
		&\ge \min_{j \le d}|\lambda_j \nabla^2 F(w)|- \| \nabla^2 F_\bZ(w) - \nabla^2 F(w)\| \nonumber\\
		&\ge m,
	\end{align*}
by Weyl's perturbation theorem \citep[Corollary~III.2.6]{Bhatia_matrix_analysis}. Therefore, for any $w \in \sB^d(R)$ satisfying $\|\nabla F_\bZ(w)\| \le \eps_0$, we have $\|\nabla F(w)\| \le 2\eps_0$, in which case the absolute values of all the eigenvalues of $\nabla^2 F_\bZ(w)$ are at least $m$. Hence, the empirical risk $F_\bZ$ is $(\eps_0,m)$-strongly Morse.
\end{proof}

\subsection{Completing the proof: a posteriori risk bound}

Now let $\barw_\bZ$ be a local minimum of the empirical risk $F_\bZ$. By Proposition~\ref{prop:strongly_Morse}, all eigenvalues of the Hessian $H = \nabla^2 F_\bZ(\barw_\bZ)$ are at least $m$; therefore, the norm $\|\cdot\|_H$ is well defined, and satisfies $\|\cdot\|_H \ge \sqrt{m}\|\cdot\|$. We begin by decomposing
\begin{align*}
	F(\barw_\bZ) - \min_{k \in [K]} F_{\bZ}(W^{(k)}) &= \Big( F(\barw_\bZ) - F_{\bZ}(\barw_\bZ) \Big) + \Big( F_{\bZ}(\barw_\bZ) - \min_{k \in [K]} F_{\bZ}(W^{(k)}) \Big) \\
	&=: E_1 + E_2,
\end{align*}
where $E_1 \le \sigma\sqrt{(cd/n)\log n}$ with probability at least $1-\delta$, by Lemma~\ref{lm:unif_conv}. Next we control the term $E_2$. Under Assumption~{\bf(A.2)}, for any $w \in \Reals^d$,
	\begin{align*}
		\left|F_{\bZ}(w) - F_{\bZ}(\barw_\bZ)- \frac{1}{2} \| w - \barw_{\bZ} \|_H^2 \right| &\leq  \frac{L}{6} \| w - \barw_{\bZ} \|^3.
	\end{align*} 
\citep[Lemma~1.2.4]{nesterov2013introductory}. Therefore,
\begin{align*}
	E_2 &= \max_{K_1 \le k \le K} \left(F_\bZ(\barw_\bZ)-F_\bZ(W^{(k)})\right) \\
	&\le \max_{K_1 \le k \le K} \left(  \frac{L}{6m^{3/2}} \|W^{(k)}-\barw_\bZ\|^3_H  - \frac{1}{2}\|W^{(k)} - \barw_{\bZ} \|_H^2 \right).
\end{align*}
Then, by Theorem~\ref{thm:emp_metastable}, with probability $1-\delta$, either $\|W^{(k)}-\barw_\bZ\|_H \ge 2\eps$ for some $k \le K_0$ or $\|W^{(k)}-\barw_\bZ\|_H \le 2\eps$ for all $K_0 \le k \le K$. If the latter occurs, then $E_2 \le 0$ for $\eps \le \dfrac{3m^{3/2}}{2L}$.

\newpage

% Acknowledgments---Will not appear in anonymized version
\acks{The authors would like to thank Matus Telgarsky for many enlightening discussions. The work of Belinda Tzen and Maxim Raginsky is supported in part by the NSF under CAREER award
CCF--1254041, and in part by the Center for Science of Information (CSoI), an NSF Science and
Technology Center, under grant agreement CCF--0939370. Tengyuan Liang is supported by George C.~Tiao Faculty Fellowship in Data Science Research.}

\bibliography{metastability_COLT.bbl}

\newpage

\appendix

   % \appendix
   
   \section{Proof of Lemma~\ref{lm:mart_tail}}
   \label{app:mart_tail}
   
   \setcounter{theorem}{0}
   \renewcommand{\thetheorem}{\Alph{section}.\arabic{theorem}}
   
Since $\{Q_{t_0}(t_1)Z^0_t\}_{t \in [t_0,t_1]}$ is a martingale and the function $x \mapsto e^{\gamma x^2}$ is convex for $\gamma > 0$, $\{\exp[(\beta\lambda/2)\|Q_{t_0}(t_1)Z^0_t\|^2]\}_{t \in [t_0,t_1]}$ is a positive submartingale. Therefore, using Doob's maximal inequality,
    	\begin{align}
    		\PP^{y_0}\left[\sup_{t_0 \le t \le t_1} \|Q_{t_0}(t_1)Z^0_t\| \ge h\right] &\le \PP^{y_0} \left[\sup_{t_0 \le t \le t_1} e^{(\beta\lambda/2)\|Q_{t_0}(t_1)Z^0_t\|^2} \ge e^{\beta\lambda h^2/2}\right] \nonumber\\
    		&\le e^{-\beta\lambda h^2/2} \E^{y_0} \left[e^{(\beta\lambda/2)\|Q_{t_0}(t_1)Z^0_{t_1}\|^2}\right].\label{eq:Z0_Chernoff}
    	\end{align}
Since $Q_{t_0}(t_1)Z_{t_1}$ is a $d$-dimensional Gaussian random vector with mean $\mu$ and covariance matrix $\Sigma$, Lemma~\ref{lm:Gaussian_integral} below gives
    \begin{align}
    	\E^{y_0} \left[e^{(\beta\lambda/2)\|Q_{t_0}(t_1)Z^0_{t_1}\|^2}\right] 
    	&= \frac{1}{\sqrt{\det(I-\beta\lambda\Sigma)}} \exp\left(\frac{\beta\lambda}{2} \ave{\mu,(I-\beta\lambda\Sigma)^{-1}\mu}\right) \nonumber\\
		&\le \left(\frac{1}{1-\lambda}\right)^{d/2}\exp\left(\frac{\beta\lambda}{2} \ave{\mu,(I-\beta\lambda\Sigma)^{-1}\mu}\right), \label{eq:Z0_mgf}
    \end{align}
where the inequality follows from the fact that, by the choice of $\lambda$, the eigenvalues of $I - \beta\lambda \Sigma = (1-\lambda)I + \lambda e^{-2tH}$ are lower-bounded by $1-\lambda$. Putting \eqref{eq:Z0_Chernoff}--\eqref{eq:Z0_mgf} together, we get \eqref{eq:Z0_tail_bound}.

    \begin{lemma}\label{lm:Gaussian_integral}
    Consider a random variable $V \sim \cN(\mu,\Sigma)$. Then, for any $\gamma > 0$ such that $I - 2\gamma\Sigma$ is positive definite,
    \begin{align}\label{eq:Gaussian_integral}
    	\E[e^{\gamma\|V\|^2}] = \frac{1}{\sqrt{\det (I - 2\gamma\Sigma)}} \exp\left(\gamma \ave{\mu,(I-2\gamma\Sigma)^{-1}\mu}\right).
    \end{align}
    \end{lemma}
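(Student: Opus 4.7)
The plan is to evaluate the expectation directly from the Gaussian density by combining exponents and completing the square, which is the standard route for moment-generating-function computations of quadratic forms in a Gaussian vector.

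First, I would write
\[
\E[e^{\gamma \|V\|^2}] = \frac{1}{(2\pi)^{d/2}\sqrt{\det \Sigma}} \int_{\Reals^d} \exp\Big(\gamma \|v\|^2 - \tfrac{1}{2}(v-\mu)^\top \Sigma^{-1}(v-\mu)\Big)\, dv,
\]
expand the exponent, and collect the quadratic and linear parts in $v$. Setting $A \deq \Sigma^{-1} - 2\gamma I$, the $v$-dependent part of the exponent becomes $-\tfrac{1}{2} v^\top A v + v^\top \Sigma^{-1}\mu$, minus the constant $\tfrac{1}{2}\mu^\top \Sigma^{-1}\mu$. The hypothesis that $I - 2\gamma \Sigma$ is positive definite, combined with the identity $A = \Sigma^{-1}(I - 2\gamma \Sigma)$, guarantees that $A$ is positive definite, which is precisely what is needed for the Gaussian integral below to converge.

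Next I would complete the square, writing
\[
-\tfrac{1}{2} v^\top A v + v^\top \Sigma^{-1}\mu = -\tfrac{1}{2}(v - A^{-1}\Sigma^{-1}\mu)^\top A (v - A^{-1}\Sigma^{-1}\mu) + \tfrac{1}{2}\mu^\top \Sigma^{-1} A^{-1} \Sigma^{-1}\mu,
\]
and then evaluate the Gaussian integral in $v$ to obtain the factor $(2\pi)^{d/2}/\sqrt{\det A}$. The determinant simplifies cleanly: from $A = \Sigma^{-1}(I - 2\gamma \Sigma)$ one reads off $\det A \cdot \det \Sigma = \det(I - 2\gamma \Sigma)$, so the prefactor $1/\sqrt{\det \Sigma \cdot \det A}$ collapses to $1/\sqrt{\det(I - 2\gamma \Sigma)}$, matching the claimed form.

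The only remaining piece is to identify the quadratic term in $\mu$ left after completing the square with the expression $\gamma \langle \mu, (I-2\gamma \Sigma)^{-1}\mu\rangle$ in \eqref{eq:Gaussian_integral}. Using $A^{-1} = (I - 2\gamma \Sigma)^{-1}\Sigma$ and commutativity of $\Sigma$ with polynomials in $\Sigma$, one finds $\Sigma^{-1} A^{-1}\Sigma^{-1} = \Sigma^{-1}(I - 2\gamma \Sigma)^{-1}$, so that
\[
\tfrac{1}{2}\mu^\top \Sigma^{-1} A^{-1} \Sigma^{-1}\mu - \tfrac{1}{2}\mu^\top \Sigma^{-1}\mu = \tfrac{1}{2}\mu^\top \Sigma^{-1}\bigl[(I - 2\gamma \Sigma)^{-1} - I\bigr]\mu,
\]
and the bracketed matrix equals $2\gamma \Sigma (I - 2\gamma \Sigma)^{-1}$ by the resolvent-style identity $(I - 2\gamma \Sigma)^{-1} - I = (I - 2\gamma \Sigma)^{-1} \cdot 2\gamma \Sigma$. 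This yields exactly $\gamma \mu^\top (I - 2\gamma \Sigma)^{-1}\mu$ in the exponent, completing the identification. There is no real obstacle here; the only thing to watch is to keep the matrix-algebra step honest, using the commutativity of $\Sigma$ and $(I - 2\gamma \Sigma)^{-1}$ and the identity $\det(\Sigma^{-1}(I-2\gamma\Sigma)) = \det(I-2\gamma\Sigma)/\det\Sigma$, both of which are immediate.
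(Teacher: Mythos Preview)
Your proof is correct and follows the same route as the paper: write the expectation as a Gaussian integral, complete the square using $A = \Sigma^{-1} - 2\gamma I$, and simplify the resulting determinant and quadratic form in $\mu$. The only cosmetic difference is that the paper invokes the Woodbury identity to reduce $\Sigma^{-1} - \Sigma^{-1}A^{-1}\Sigma^{-1}$ to $-2\gamma(I-2\gamma\Sigma)^{-1}$, whereas you get there more directly by exploiting that all matrices in sight commute with $\Sigma$.
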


    \begin{proof} We have
    	\begin{align}
    		\E[e^{{\gamma}\|V\|^2}] &= \frac{1}{\sqrt{\det(2\pi \Sigma)}}\int_{\Reals^d} e^{-\frac{1}{2}\ave{v-\mu,\Sigma^{-1}(v-\mu)}}e^{{\gamma} \ave{v,v}} \d v \nonumber\\
    		&= \frac{1}{\sqrt{\det(2\pi \Sigma)}} \int_{\Reals^d}e^{-\frac{1}{2}[\ave{v-\mu,\Sigma^{-1}(v-\mu)}-2{\gamma}\ave{v,v}]}\d v.\label{eq:Gauss_integral}
    	\end{align}
We proceed to complete the square in the exponent:
    	\begin{align}
   	&	\ave{v-\mu,\Sigma^{-1}(v-\mu)}-2{\gamma}\ave{v,v}  \nonumber\\
    	& \qquad = \ave{v,\underbrace{(\Sigma^{-1}-2{\gamma}I)}_{S}v} - 2\ave{\underbrace{\Sigma^{-1}\mu}_{Su}, v} + \ave{\mu,\Sigma^{-1}\mu} \nonumber \\
    	&= \ave{v-u,S(v-u)}+ \ave{(I-S^{-1}\Sigma^{-1})\mu,\Sigma^{-1}\mu} \nonumber \\
    	&= \ave{v-u,S(v-u)} + \ave{\mu,(\Sigma^{-1}-\Sigma^{-1}S^{-1}\Sigma^{-1})\mu}.\label{eq:complete_squares}
    	\end{align}
Now we recall the Woodbury matrix identity: If $B = A + UV$, such that $A$ and $I + VA^{-1}U$ are invertible, then
    \begin{align}\label{eq:Woodbury}
    	B^{-1} = A^{-1} - A^{-1}U(I + VA^{-1}U)^{-1}VA^{-1}.
    \end{align}
Applying \eqref{eq:Woodbury} with $B = S$, $A = \Sigma^{-1}$, $U = -2{\gamma}I$, $V = I$, we can write
    \begin{align*}
    	S^{-1} &= (\Sigma^{-1}-2{\gamma}I)^{-1} \\
    	&= \Sigma + 2{\gamma}\Sigma(I - 2{\gamma}\Sigma)^{-1}\Sigma,
    \end{align*}
    which gives
    \begin{align*}
    	\Sigma^{-1}-\Sigma^{-1}S^{-1}\Sigma^{-1} &= \Sigma^{-1} - \Sigma^{-1} \left(\Sigma + 2{\gamma}\Sigma(I - 2{\gamma}\Sigma)^{-1}\Sigma \right)\Sigma^{-1} \\
    	&= \Sigma^{-1} - \Sigma^{-1} - 2{\gamma}(I-2{\gamma}\Sigma)^{-1} \\
    	&= - 2{\gamma}(I-2{\gamma}\Sigma)^{-1} .
    \end{align*}
    Substituting this into \eqref{eq:complete_squares}, we get
    \begin{align*}
    	\ave{v-\mu,\Sigma^{-1}(v-\mu)}-2{\gamma}\ave{v,v} &= \ave{v-u,S(v-u)} - 2{\gamma}\ave{\mu,(I-2{\gamma}\Sigma)^{-1}\mu}.
    \end{align*}
    We can now compute the Gaussian integral in \eqref{eq:Gauss_integral}:
    \begin{align*}
    	&\frac{1}{\sqrt{\det(2\pi \Sigma)}} \int_{\Reals^d}e^{-\frac{1}{2}[\ave{v-\mu,\Sigma^{-1}(v-\mu)}-2{\gamma}\ave{v,v}]}\d v \\
    	&= \frac{1}{\sqrt{\det (S\Sigma)}} \exp\left({\gamma}\ave{\mu,(I-2{\gamma}\Sigma)^{-1}\mu}\right) \\
    	&=\frac{1}{\sqrt{\det (I-2{\gamma}\Sigma)}} \exp\left({\gamma}\ave{\mu,(I-2{\gamma}\Sigma)^{-1}\mu}\right),
    \end{align*}
	which gives \eqref{eq:Gaussian_integral}.
    \end{proof}

   \section{Proof of Lemma~\ref{lm:unif_conv}}
   \label{app:unif_conv}
	
	\paragraph{Subgaussian noise conditions.} For any $w \in \sB^d(R)$, we have
   \begin{align*}
	   |f(w,Z)| &\le |f(0,Z)| + |f(w,Z)-f(0,Z)| \\
	   &\le |f(0,Z)| + \sup_{v \in \sB^d(R)}\|\nabla f(v,Z)\| \|w\| \\
	   &\le |f(0,Z)| + \left(\|\nabla f(0,Z)\| + MR\right)\|w\| \\
	   &\le A + (B+MR)R.
	\end{align*}
	Thus, by the Hoeffding lemma, the random variable $f(w,Z)$ is subgaussian: for any $t > 0$
		\begin{align}\label{eq:0th_order_noise}
			\E \exp\left( t \Big(f(w, Z) - F(w)\Big)\right) \leq \exp\left( \frac{\sigma_0^2 t^2}{2} \right)
		\end{align}
	with $\sigma^2_0 = [A+(B+MR)R]^2$. For any $w \in \sB^d(R)$ and $v \in \sB^d(1)$, we have
	\begin{align*}
		|\langle{v, \nabla f(w,Z)\rangle}| &\le \|v\| \|\nabla f(0,Z)\| + \|v\| \|\nabla f(w,Z)-\nabla f(0,Z)\| \\
		&\le B + MR.
	\end{align*}
	Thus, by the Hoeffding lemma, the random variable $\ave{v,\nabla f(w,Z)}$ is subgaussian: for any $t > 0$,
		\begin{align}\label{eq:1st_order_noise}
			\E \exp\left( t \langle v, \nabla f(w, Z) - \nabla F(w) \rangle  \right) \leq \exp\left( \frac{\sigma_1^2 t^2}{2} \right)
		\end{align}
with $\sigma_1^2 = (B + MR)^2$. In the same vein,
	\begin{align*}
		|\ave{v, \nabla^2 F(w,Z)v}| &\le \|v\|^2 \|\nabla^2 F(0,Z)\| + \|v\|^2 \|\nabla^2 F(w,Z)-\nabla^2 F(0,Z)\| \\
		&\le C + LR,
	\end{align*}
hence, for any $t > 0$,
	\begin{align}\label{eq:2nd_order_noise}
	\E \exp\left( t \left\langle v,\big(\nabla^2 f(w, Z) - \nabla^2 F(w)\big)v \right\rangle  \right) \leq \exp\left( \frac{\sigma_2^2 t^2}{2} \right)
	\end{align}
with $ \sigma_2^2 = (C + LR)^2$.

\paragraph{Uniform deviation bound for the risk.} For any two $w,w' \in \sB^d(R)$, 
\begin{align*}
	|f(w,Z)-f(w',Z)| &\le \sup_{v \in \sB^d(R)} \|\nabla f(v,Z)\| \|w-w'\| \\
	&\le (B+MR)\|w-w'\|.
\end{align*}
Therefore, if we let $\{w_1,\ldots,w_N\}$ be an $\eps$-cover of $\sB^d(R)$, with $N \le (3R/\eps)^d$, then
\begin{align*}
	\sup_{w \in \sB^d(R)}\left| \frac{1}{n}\sum^n_{i=1} f(w,Z_i) -  F(w)\right| \le \max_{1 \le j \le N} \left|\frac{1}{n}\sum^n_{i=1} f(w_j,Z_i) -  F(w_j)\right| + 2(B+MR)\eps.
\end{align*}
Hence, for any $t > 4(B+MR)\eps$, we have
\begin{align*}
&	\PP\left[ \sup_{w \in \sB^d(R)}\left| \frac{1}{n}\sum^n_{i=1} f(w,Z_i) -  F(w)\right| \ge t \right] \\
&\le \PP\left[ \max_{1 \le j \le N}\left| \frac{1}{n}\sum^n_{i=1} f(w_j,Z_i) - F(w_j)\right| \ge \frac{t}{2} \right]  \\
&\le \left(\frac{3R}{\eps}\right)^d \max_{1 \le j \le N}\PP\left[\left|\frac{1}{n}\sum^n_{i=1}f(w_j,Z_i)-F(w_j)\right| \ge \frac{t}{2}\right] \\
&\le \left(\frac{6R}{\eps}\right)^d \exp\left(-\frac{nt^2}{8\sigma^2_0}\right),
\end{align*}
where we have used the fact that $f(w,Z)$ is $\sigma^2_0$-subgaussian [cf.~\eqref{eq:0th_order_noise}].
Choosing $\eps = \frac{\sigma_0}{4(B+MR)dn}$ and
\begin{align*}
	t \ge \frac{\sigma_0}{dn} \vee \sqrt{\frac{8\sigma^2_0}{n}\left(d \log \frac{24(B+MR)Rdn}{\sigma_0}+\log\frac{1}{\delta}\right)},
\end{align*}
we see that \eqref{eq:risk_concentration} holds with probability at least $1-\delta$ for a suitable choice of $c_0$.

\paragraph{Uniform deviation bound for the gradient.}
Let $\{v_1,\ldots,v_J\}$ be a $(1/2)$-cover of $\sB^d(1)$, with $J \le 6^d$. Then, for any $w \in \sB^d(R)$, using the fact that $\ave{v,f(w,Z)}$ is $\sigma^2_1$-subgaussian [cf.~\eqref{eq:1st_order_noise}],
\begin{align*}
&	\PP\left[\left\| \frac{1}{n}\sum^n_{i=1}\nabla f(w,Z_i) - \nabla F(w)\right\| \ge t \right] \\
& \le \PP\left[ \max_{1 \le j \le J} \left\langle v_j, \frac{1}{n}\sum^n_{i=1}\nabla f(w,Z_i)-\nabla F(w)\right\rangle \ge \frac{t}{2}\right] \\
& \le 6^d \exp\left(-\frac{nt^2}{8\sigma^2_1}\right).
\end{align*}
Next, let $\{w_1,\ldots,w_N\}$ be an $\eps$-cover of $\sB^d(R)$, with $N \le (3R/\eps)^d$. Then, since
\begin{align*}
	\sup_{w \in \sB^d(R)}\left\| \frac{1}{n}\sum^n_{i=1}\nabla f(w,Z_i) - \nabla F(w)\right\| \le \max_{1 \le j \le N} \left\|\frac{1}{n}\sum^n_{i=1}\nabla f(w_j,Z_i) - \nabla F(w_j)\right\| + 2M\eps,
\end{align*}
for any $t > 4M\eps$ we have
\begin{align*}
&	\PP\left[ \sup_{w \in \sB^d(R)}\left\| \frac{1}{n}\sum^n_{i=1}\nabla f(w,Z_i) - \nabla F(w)\right\| \ge t \right] \\
&\le \PP\left[ \max_{1 \le j \le N}\left\| \frac{1}{n}\sum^n_{i=1}\nabla f(w_j,Z_i) - \nabla F(w_j)\right\| \ge \frac{t}{2} \right]  \\
&\le \left(\frac{3R}{\eps}\right)^d \max_{1 \le j \le N} \PP\left[\left\| \frac{1}{n}\sum^n_{i=1}\nabla f(w_j,Z_i) - \nabla F(w_j)\right\| \ge \frac{t}{2} \right] \\
&\le \left(\frac{18R}{\eps}\right)^d \exp\left(-\frac{nt^2}{32\sigma^2_1}\right).
\end{align*}
Choosing $\eps = \frac{\sigma_1}{4Mdn}$ and
\begin{align*}
	t \ge \frac{\sigma_1}{dn} \vee \sqrt{\frac{32\sigma^2_1}{n}\left(d \log \frac{72MRdn}{\sigma_1}+\log\frac{1}{\delta}\right)},
\end{align*}
we see that \eqref{eq:uniform_grad_conv} holds with probability at least $1-\delta$ for a suitable choice of $c_0$.

\paragraph{Uniform deviation bound for the Hessian.} Now, let $\{v_1,\ldots,v_J\}$ be an $(1/4)$-cover of $\sB^d(1)$, with $J \le 12^d$. Then, for any $w \in \sB^d(R)$, using the fact that $\ave{v,\nabla^2 f(w,Z)v}$ is $\sigma^2_2$-subgaussian [cf.~\eqref{eq:2nd_order_noise}],
\begin{align*}
&	\PP\left[\left\| \frac{1}{n}\sum^n_{i=1}\nabla^2 f(w,Z_i) - \nabla^2 F(w)\right\| \ge t \right] \\
& \le \PP\left[ \max_{1 \le j \le J} \left\langle v_j, \left(\frac{1}{n}\sum^n_{i=1}\nabla^2 f(w,Z_i)-\nabla^2 F(w)\right)v_j\right\rangle \ge \frac{t}{2}\right] \\
& \le 12^d \exp\left(-\frac{nt^2}{8\sigma^2_2}\right).
\end{align*}
Again, let $\{w_1,\ldots,w_N\}$ be an $\eps$-cover of $\sB^d(R)$. Then, since
\begin{align*}
	\sup_{w \in \sB^d(R)}\left\| \frac{1}{n}\sum^n_{i=1}\nabla^2 f(w,Z_i) - \nabla^2 F(w)\right\| \le \max_{1 \le j \le N} \left\|\frac{1}{n}\sum^n_{i=1}\nabla^2 f(w_j,Z_i) - \nabla^2 F(w_j)\right\| + 2L\eps,
\end{align*}
for any $t > 4L\eps$ we have
\begin{align*}
&	\PP\left[ \sup_{w \in \sB^d(R)}\left\| \frac{1}{n}\sum^n_{i=1}\nabla^2 f(w,Z_i) - \nabla^2 F(w)\right\| \ge t \right] \\
&\le \PP\left[ \max_{1 \le j \le N}\left\| \frac{1}{n}\sum^n_{i=1}\nabla^2 f(w_j,Z_i) - \nabla^2 F(w_j)\right\| \ge \frac{t}{2} \right]  \\
&\le \left(\frac{3R}{\eps}\right)^d \max_{1 \le j \le N} \PP\left[\left\| \frac{1}{n}\sum^n_{i=1}\nabla^2 f(w_j,Z_i) - \nabla^2 F(w_j)\right\| \ge \frac{t}{2} \right] \\
&\le \left(\frac{36R}{\eps}\right)^d \exp\left(-\frac{nt^2}{32\sigma^2_1}\right).
\end{align*}
Choosing $\eps = \frac{\sigma_2}{4Ldn}$ and
\begin{align*}
	t \ge \frac{\sigma_2}{dn} \vee \sqrt{\frac{32\sigma^2_2}{n}\left(d \log \frac{144LRdn}{\sigma_2}+\log\frac{1}{\delta}\right)},
\end{align*}
we see that \eqref{eq:uniform_grad_conv} holds with probability at least $1-\delta$ for a suitable choice of $c_0$.

\end{document}